\documentclass[lettersize,journal]{IEEEtran}
\usepackage{mathrsfs}
\usepackage{amsmath}
\usepackage{amsthm}
\usepackage{tabularx}
\usepackage{booktabs}
\usepackage{amssymb}
\usepackage{graphicx}
\usepackage{subfigure}
\usepackage{times}
\usepackage{latexsym}
\usepackage{subfigure}
\usepackage{cite}
\usepackage{balance}
\usepackage{multirow}
\usepackage{algorithmic}
\usepackage{color}
\usepackage{float}
\usepackage{cases}
\usepackage{algorithm}
\usepackage{yhmath}
\usepackage{bm}
\usepackage{footnote}
\usepackage[justification=centering]{caption}
\captionsetup{font = footnotesize}

\newtheorem{lemma}{Lemma}
\newtheorem{theorem}{Theorem}

\theoremstyle{remark}
\newtheorem{remark}{Remark}

\begin{document}
\title{BEV-SGD: Best Effort Voting SGD against Byzantine Attacks for Analog Aggregation based Federated Learning Over the Air} % Based Federated Learning against Byzantine Attackers}
\author{ Xin Fan$^{1}$, Yue Wang$^2$,~\IEEEmembership{Member,~IEEE}, Yan Huo$^{1}$,~\IEEEmembership{Senior Member,~IEEE}, and Zhi Tian$^2,~\IEEEmembership{Fellow,~IEEE}$\\
$^{1}$School of Electronics and Information Engineering, Beijing Jiaotong University, Beijing, China\\
$^2$Department of Electrical \& Computer Engineering, George Mason University, Fairfax, VA, USA
\\E-mails: \{yhuo,fanxin\}@bjtu.edu.cn, \{ywang56,ztian1\}@gmu.edu}
\maketitle

\begin{abstract}
As a promising distributed learning technology, analog aggregation based federated learning over the air (FLOA) provides high communication efficiency and privacy provisioning under the edge computing paradigm. When all edge devices (workers) simultaneously upload their local updates to the parameter server (PS) through commonly shared time-frequency resources, the PS obtains the averaged update only rather than the individual local ones. While such a concurrent transmission and aggregation scheme reduces the latency and communication costs, it unfortunately renders FLOA vulnerable to Byzantine attacks. Aiming at Byzantine-resilient FLOA, this paper starts from analyzing the channel inversion (CI) mechanism that is widely used for power control in FLOA.
%To design a Byzantine-resilient FL over the air paradigm, this paper analyzes the existing channel inversion (CI) power control policy.
Our theoretical analysis indicates that although CI can achieve good learning performance in the benign scenarios, it fails to work well with limited defensive capability against Byzantine attacks.
%\textcolor[rgb]{1.00,0.00,0.00}{By our theoretical analysis, we find that CI scheme for FL over the air can achieve performance approximating that of the ideal case where the communications are perfect, which providing the explanation why it is widely used}, but its defensive capacity to Byzantine attacks is limited.
Then, we propose a novel scheme called the best effort voting (BEV) power control policy that is integrated with stochastic gradient descent (SGD). Our BEV-SGD enhances the robustness of FLOA to Byzantine attacks, by allowing all the workers to send their local updates at their maximum transmit power.
%We thus propose a novel best effort voting (BEV) power control policy that can improve the robustness of FL over the air to Byzantine attacks, by allowing all the workers to send their local updates at their maximum transmit power.
Under worst-case attacks, we derive the expected convergence rates of FLOA with CI and BEV power control policies, respectively. The rate comparison reveals that our BEV-SGD outperforms its counterpart with CI in terms of better convergence behavior, which is verified by experimental simulations.
%To compare the performances, we \textcolor[rgb]{1.00,0.00,0.00}{point out the worst-case attack a Byzantine attacker can achieve and the theoretical proof is provided.} Under the worst-case attack, we provide the expected convergence rates of FL over the air with CI and BEV power control policies, respectively. Through theoretical analysis and experimental simulation, we prove that our BEV scheme is \textcolor[rgb]{1.00,0.00,0.00}{more recommended} in practical application.
\end{abstract}%how Byzantine attacks affect
\begin{IEEEkeywords}
Federated learning, analog aggregation, Byzantine attack, best effort voting, channel-inversion, convergence analysis.
\end{IEEEkeywords}

\section{Introduction}
Edge intelligence has been recognized as a key enabler of various Internet-of-Things (IoT) services and applications in next-generation wireless systems\cite{zhu2020toward,mao2017survey}. Federated learning (FL) provides a promising paradigm for edge intelligence, by taking advantages of parallel computing at edge devices and privacy-aware access to rich distributed data\cite{konevcny2016federated,mcmahan2017communication,yang2019federated,chen2020joint}. %At each communication round in FL, a parameter server (PS) broadcasts parameters of the global learning model to local devices (workers); next, the local workers compute their local gradient updates by using their own local datasets and then upload them to the PS for the gradient aggregation to update the global model. In this way, FL avoids the direct uploading of the high-volume raw data so that the worker privacy is protected and the communication overhead is reduced. However, FL may still suffer a communication bottleneck due to the uploading of the high-dimensional gradient updates, especially when the number of local workers involved is large.
To achieve communication-efficient FL, sparsification\cite{aji2017sparse, lin2017deep}, quantization\cite{liu2019decentralized,seide20141, alistarh2017qsgd} and infrequent uploading of local updates \cite{liu2019communication,8755802,chen2018lag,8646657,xu2020coke} are developed to reduce the amount of data to be digitally transmitted over wireless systems. However, the communication overhead and latency are still proportional to the number of local workers participated %involved local workers
in FL over digital communication channels. To handle this issue, FL over the air (FLOA) is recently proposed as a new framework for distributed learning \cite{fan2021communication,fan2021joint,fan20211,zhu2019broadband,cao2019optimal,yang2020federated,zhu2020one,amiri2020machine, amiri2020federated1,amiri2019collaborative,sun2019energy}, which exploits the over-the-air computation (AirComp) principle\cite{nazer2007computation,gastpar2008uncoded} for ``one-shot” aggregation via local workers' simultaneous %concurrent
update transmission over %using
the same time-frequency resources. %By exploiting
Based on the inherent waveform superposition property of %a
wireless multiple access channels (MAC),
AirComp %achieves a functional computation that
allows to directly collect the gradient aggregation among local workers via concurrent transmission and computation
%is a joint design of the communication and computation for the receiver to directly obtain the aggregation value
\cite{nazer2007computation,gastpar2008uncoded,abari2015airshare}, which exactly fits the need of FL for utilizing %characteristic that FL requires
only an average of all distributed local gradients but not the individual values.% that is independent of the network size

By virtue of its communication-efficient gradient aggregation, FLOA has attracted growing interest from multiple research communities to advance its development from the perspectives of communications, optimization and machine learning, such as power control\cite{cao2019optimal,fan2021joint,zhang2021gradient}, devices scheduling\cite{zhu2019broadband,fan2021joint,sun2019energy}, gradient compression\cite{zhu2020one,amiri2020machine, amiri2020federated1,fan2021communication,fan20211}, beamforming design\cite{yang2020federated,wang2021edge,amiri2019collaborative} and learning rate optimization\cite{xu2021learning}. For instance, a broadband analog aggregation scheme for %broadband
power control and device scheduling in FLOA %over the air
is proposed in \cite{zhu2019broadband}, where a set of tradeoffs between communications and learning are discussed. %derived.
In \cite{fan2021joint}, convergence analysis is provided to quantify the impact of AirComp on FL and then joint optimization of communication and learning is proposed for optimal power scaling and device scheduling.
%Based on a convergence analysis that quantifies the impact of AirComp on FL, a joint optimization of communication and learning is proposed in \cite{fan2021joint} to obtain the optimal power scaling and device scheduling.
Considering energy-constrained local devices, an energy-aware device scheduling strategy is proposed in \cite{sun2019energy} to maximize the average number of workers scheduled for gradient update. %As for
For update compression, sparsification\cite{amiri2020machine,amiri2020federated1}, quantization\cite{zhu2020one} and compressive-sensing based methods \cite{fan2021communication,fan20211} are proposed to further improve communication efficiency. In multiple antennas scenarios, a joint design of device scheduling and beamforming is %was
presented in \cite{yang2020federated} to maximize the number of selected workers under a given mean square error (MSE) requirement. Since hyper-parameters can also affect learning performance, a learning rate optimization scheme is proposed for multi-antenna systems to further improve the MSE performance and the testing accuracy\cite{xu2021learning}.

Beside its superior communication efficiency over conventional FL, FLOA also enhances the data privacy thanks to %due to
its inherent unaccessibility to individual local gradients, which thus prevent %avoids the risk of
potential model inversion attacks, e.g., deep leakage from gradients\cite{zhu2020deep}. %Due to the unaccessibility of individual values,
While FLOA %over the air
closes the doors to deep leakage from gradients, it leaves the windows open for adversaries to perform Byzantine attacks as well. %It is well-known that
In fact, even a single Byzantine fault may destroy FL. Byzantine-robust aggregation has been well studied for vanilla FL\cite{yang2020adversary,yin2018byzantine,dong2019secure,damaskinos2019aggregathor}, most of which uses a screening method, such as geometric median\cite{minsker2015geometric,wu2020federated,chen2017distributed,huang2021byzantine}, coordinate-wise median\cite{yin2018byzantine}, coordinate-wise trimmed mean\cite{yin2018byzantine}, Krum/Multi-Krum\cite{blanchard2017machine}, Bulyan\cite{guerraoui2018hidden,el2019fast}, Zeno/Zeno++\cite{xie2018zeno,xie2019zeno++} and so on\cite{yang2020adversary}. The %core of
basic idea of these screening methods is to %try to
exclude outliers while aggregating the rest of local gradients. All of them %these methods rely on knowing
hinge on the knowledge on the individual values of local gradients, which is however not accessible in FLOA due to the analog superposition of all local gradients over the air. Thus, existing Byzantine-robust methods designed for vanilla FL cannot be applied to FLOA, which motivates us to design a new Byzantine-resilient approach customized for FLOA. % fail to work for FL over the air, which motivates our work.%The central idea in all these approaches that imparts Byzantine resilience to distributed SGD involves the use of a screening procedure at the server while it aggregates the local gradientsThe individual values are required in these methods but is not accessible in FL over the air.\footnote{Note that the title of the work \cite{huang2021byzantine} sounds like a study of the Byzantine attack in the over-the-air transmission. Actually, it is a geometric median based approach and the over-the-air transmission is used to improve communication efficiency.}

To the best of our knowledge, there is no literature so far on the study of Byzantine attacks to the over-the-air transmissions, nor is there any design of counter-attack measures for FLOA. In this work, we aim to
deeply understand how Byzantine attacks affect FLOA %over the air
and then provide the corresponding defense strategy. Our main contributions are three-fold.
\begin{itemize}
\item Given the fact %an observation
that most prior works on FLOA % over the air
adopt channel inversion (CI) power control (or its variants)\cite{zhu2019broadband,fan2021joint,sun2019energy,zhu2020one,amiri2020machine, fan2021communication,fan20211,xia2020fast,xu2021learning,liu2020privacy}, we first theoretically prove that the CI methods under fading channels can achieve performance approximating
that of the ideal error-free case, % where the communications are perfect,
which explains why it is widely used to overcome the transmission errors in FL.
Meanwhile, our analysis reveals that the defensive capacity of CI is very limited against Byzantine attacks. % is limited, which is revealed by our analysis as well.
Thus, we propose a new robust transmission policy to counter Byzantine attacks, named the best effort voting (BEV) power control policy, where local workers transmit their local gradients with their maximum power.
\item To study the impact of Byzantine attacks to FLOA, we derive the transmission policy of intelligent Byzantine attackers, including the falsified gradients and transmit power, that can maximally deter the convergence of FLOA.
As this is the strongest attack, it is meaningful to assess its impact on FLOA under various transmission policies, which in turn serves to illuminate the respective robustness level of these policies.% for the performance analysis.%attackers may want to follow this guideline to design their transmission policy to attack FLOA.
%\item Even though CI is widely used and a good choice for FL over the air, we theoretically prove that it is not optimal against Byzantine attacks. Thus, we propose a new transmission policy, named the best effort voting (BEV) power control policy, where local workers transmit their local gradients with their maximum power., in which the case of no Byzantine attack is a special case.theoretically prove that there exists an extreme case as the strongest attack %worst-case attack
%for a Byzantine attacker in order to prevent FLOA from converging %over the air from converging
%to the correct updating direction. %We provide the specific worst-case attack, which a Byzantine attacker may adopt to design its transmission policy.
\item To demonstrate the effectiveness of our proposed BEV method compared with
%To compare BEV with
the popular CI scheme under the strongest attacks, %worst-case attacks,
we provide the convergence analysis for both our BEV and the existing CI. %of them. We thus theoretically prove
Our theoretical results prove that BEV outperforms CI in terms of better convergence behavior under the strongest Byzantine attacks. %is better than CI in practice.
\end{itemize}

We also test the proposed method %provide simulation results
on image classification problems using the MNIST dataset. Simulation results show that the learning performance of BEV is slightly worse %weaker
than that of CI when there are no Byzantine attacks, while BEV significantly outperforms %is much better than
CI in terms of the robustness to against Byzantine attacks. Thus, our theoretical analysis and simulation results suggest %and it is proved by theory and simulation
that BEV is preferred over CI in practical applications that are subject to Byzantine attacks.

The rest of this paper is organized as follows. The system model for FLOA is presented in Section II, where we provide two power control policies i.e., CI and BEV. The closed-form expressions of their expected
convergence rate are derived to compare the performance of different power control policies in Section III, where we also delineate the strongest %worst-case
attack case for a Byzantine attacker. Simulation results are provided in Section IV, followed by conclusions in Section V. % and conclusions are drawn in Section V.

\section{System Model}\label{sec:Model}
\subsection{Federated Learning Model}
%We c
Consider a distributed computation model with one parameter server (PS) and $U$ local workers. Each local worker stores $K$ data points, which are independent and identically distributed (i.i.d.) samples drawn from a large dataset $\mathcal{D}$\cite{yin2018byzantine,dong2019secure,damaskinos2019aggregathor}. The Byzantine-resilient issue for the non-i.i.d. case is more involved, which is left for future work. Denote $(\mathbf{x}_{i,k},\mathbf{y}_{i,k})$ as the $k$-th data of the $i$-th local worker. Let $f(\mathbf{w};\mathbf{x}_{i,k},\mathbf{y}_{i,k})$ denote the loss function associated with each data point $(\mathbf{x}_{i,k},\mathbf{y}_{i,k})$, where $\mathbf{w}=[w^1, \ldots, w^D]$ of size $D$ consists of the model parameters.
 The corresponding population loss function is denoted as $F(\mathbf{w}):=\mathbb{E}_{\mathcal{D}}[f(\mathbf{w};\mathbf{x}_{i,k},\mathbf{y}_{i,k})]$. The PS and local workers collaboratively learn the model parameter vector $\mathbf{w}$ by minimizing%a model defined by the parameter $\mathbf{w}$ that minimizes the population loss: %Our goal is to learn a model defined by the parameter that minimizes the population loss:
%\begin{subequations}Let $f(\mathbf{w};\mathbf{x}_{i,k},\mathbf{y}_{i,k})$ denote a loss function of a parameter vector $\mathbf{w}=[w^1, \ldots, w^D] \in \mathcal{R}^D$ of dimension $D$ associated with the data point $(\mathbf{x}_{i,k},\mathbf{y}_{i,k})$.
\begin{align}\label{eq:lossfopt}
 \textbf{P1:} \quad \mathbf{w}^*= \arg \min_{\mathbf{w}}& \quad F(\mathbf{w}).
\end{align}
%\end{subequations}

The minimization of $F(\mathbf{w})$ is typically carried out through stochastic gradient descent (SGD) algorithm. At the PS, the model parameter $\mathbf{w}_t$ at the $t$ iteration is updated as
\begin{align}\label{eq:localupdate0}
  \text{(Model updating)}\quad \mathbf{w}_{t}&=\mathbf{w}_{t-1}-\alpha \frac{\sum_{i=1}^{U}\mathbf{g}_{i,t}}{U},
\end{align}
where $\alpha$ is the learning rate and $\mathbf{g}_{i,t}=\nabla f(\mathbf{w}_{t-1};\mathbf{x}_{i,k},\mathbf{y}_{i,k})$ is the local gradient computed at the $i$-th local worker using its randomly selected the data sample, say the $k$-th sample. Some communication and aggregation scheme needs to be in place in order for the PS to acquire the sum of local gradients in \eqref{eq:localupdate0} from local workers.

%To achieve \eqref{eq:localupdate0}, the PS need to communicate with the local workers using some predefined protocol.
%We a
Assume that $N$ out of $U$ local workers are Byzantine attackers, and the remaining $M=U-N$ local workers are normal. %To achieve \eqref{eq:localupdate0}, the PS needs to communicate with the local workers to collect and broadcast gradients through certain predefined protocol. %using some predefined protocol.
However, the Byzantine attackers do not need to follow %abide by
this protocol and can send arbitrary messages to the PS. %In particular, they may
Even worse, these attackers may have complete knowledge of the learning system and algorithms, and can collude with each other.
Further, the communications between the PS and local workers inevitably introduce channel noise, while Byzantine attackers could also exploit this opportunity to disrupt FLOA.
Next, we will show that % It will be shown in the sequel,
different predefined analog aggregation transmission %communication
protocols result in %brings
different performance of FLOA in the presence of Byzantine attacks. %to FL.%, due to the

\subsection{Analog Aggregation Transmission Model}

In FLOA, to exploit over-the-air computation for low-latency gradient aggregation, local gradients are amplitude-modulated for analog transmission and simultaneously transmitted from local workers to the PS through the same multi-access channel. %We a
Assume that symbol-level synchronization is achieved among the local workers through a synchronization channel \cite{zhu2019broadband}.
To facilitate the power control design, the transmitted symbols, denoted by $\tilde{\mathbf{g}}_{i,t}=[\tilde{g}_{i,t}^1,...,\tilde{g}_{i,t}^d,...,\tilde{g}_{i,t}^D]$, are standardized such that they have zero mean and unit variance, i.e., $\mathbb{E}[(\tilde{g}_{i,t}^d)^2]=1, \forall i, t$. In this way, the power control policy can be designed at the PS without knowledge of the specific transmitted symbols. Note that the standardization factors are uniform for all local gradients and therefore can be inverted at the PS. %At the beginning of each iteration, each local worker needs to normalize their gradient without the specific transmitted symbols pre-known at the PS

%Since the statistics of the gradients may change over communication rounds, to ensure proper standardization, the required gradient statistics (i.e, mean and variance) are estimated per communication round using the locally learnt gradients at the round. %The procedure for the PS to access the information involves three main steps as described below.
Since the statistics of the gradients may change over iterations, the standardization is executed % needed for
in all communication rounds. Specifically, at the beginning of each communication round, each local worker estimates its mean and variance of the locally learnt gradient, denoted by $\bar{g}_{i,t}=\frac{1}{D}\sum_{d=1}^{D} g_{i,t}^d$ and $\epsilon^2_{i,t}=\frac{1}{D}\sum_{d=1}^{D}(g_{i,t}^d-\bar{g}_{i,t})^2$, respectively.
%\begin{align}\label{eq:localmean}
% \bar{g}_{i,t}&=\frac{1}{D}\sum_{d=1}^{D} g_{i,t}^d,\\
% \epsilon^2_{i,t}&=\frac{1}{D}\sum_{d=1}^{D}(g_{i,t}^d-\bar{g}_{i,t})^2.
%\end{align}
Then the locally estimated mean and variance are transmitted to the PS for global gradient statistics estimation by averaging.
%
%Upon receiving
Given the received $\bar{g}_{i,t}$ and $\epsilon^2_{i,t}$, the PS averages all the local estimates to get the global estimates of the mean and variance of the gradient as $\bar{g}_{t}=\frac{1}{U}\sum_{i=1}^{U} \bar{g}_{i,t}$ and $\epsilon^2_{t}=\frac{1}{U}\sum_{i=1}^{U} \epsilon^2_{i,t}$.
%\begin{align}\label{eq:Globemean}
% \bar{g}_{t}&=\frac{1}{U}\sum_{i=1}^{U} \bar{g}_{i,t},\\\label{eq:Globevar}
% \epsilon^2_{t}&=\frac{1}{U}\sum_{i=1}^{U} \epsilon^2_{i,t}.
%\end{align}
Then the estimated $\bar{g}_{t}$ and $\epsilon^2_{t}$ are broadcast back to the local workers and used for the standardization.

  After receiving the standardization factors $\bar{g}_{t}$ and $\epsilon^2_{t}$, each local worker performs the transmit signal standardization as follows:
\begin{align}\label{eq:normalizedlocal}
 \tilde{\mathbf{g}}_{i,t}=\frac{\mathbf{g}_{i,t}-\bar{g}_{t}\mathbf{1}}{\epsilon_{t}},
\end{align}
where $\mathbf{1}$ is an all-one vector with dimension equal to that of $\mathbf{g}_{i,t}$. % , the dimension of which is the same as $\mathbf{g}_{i,t}$ and all the entries of which are 1.

Considering only two symbols ($\bar{g}_{t}$ and $\epsilon^2_{t}$) transmitted in each communication round, the individual locally estimated mean and variance are collected at the PS one by one. We assume that such communications for standardization are noise-free without introducing errors.
%Here, since only two symbols are transmitted at each communication round, we assume the communication for standardization perfect (i.e., ignore the errors in the communication for the procedure of standardization) and the local workers use their individual link to transmit their signals. Due to the individual links, the PS can know the locally estimated mean and variance of local workers individually.
Note that the Byzantine attackers know the designed standardization method, and they would send the true mean and variance of their local gradients to avoid exposing themselves during the standardization stage. Otherwise, the attackers may be easily detected and then filtered out by the PS, as the normal workers and Byzantine workers have i.i.d. datasets. % (since both normal workers and Byzantine workers have i.i.d. datasets, the PS may compare the values received from local workers so to kick out the bad one).

After standardization, all local workers transmit their standardized local gradients $\tilde{\mathbf{g}}_{i,t}$ to the PS with certain transmit power $p_{i,t}$ (the design of power control on $p_{i,t}$ will be discussed later in this section).  %to be designed in the sequel.
The transmission of each local worker is subject to the transmit power constraint:
\begin{align}
\mathbb{E}[\|p_{i,t}\tilde{\mathbf{g}}_{i,t}\|^2]&=\mathbb{E}[p_{i,t}^2\sum_{d=1}^D(\tilde{g}^d_{i,t})^2]=p_{i,t}^2\sum_{d=1}^D\mathbb{E}[(\tilde{g}^d_{i,t})^2]\nonumber\\
&=Dp_{i,t}^2\leq p_i^{\max}, \ \ \forall i.\label{eq:powerConstraint}
\end{align}
Thus the power constraint boils down to $p_{i,t}^2\leq\frac{p_i^{\max}}{D}$.

%On the other hand, since the Byzantine attackers would like the global gradient to be updated at the PS in the opposite direction of normal local workers, the Byzantine attackers transmit $\dot{\mathbf{g}}_{n,t}=-\frac{\mathbf{g}_{n,t}-\bar{g}_{t}\mathbf{1}}{\epsilon_{t}}$ to the PS with the maximum transmit power $\sqrt{\frac{p_n^{\max}}{D}}$, where $\mathbf{g}_{n,t}$ is the gradient computed by the Byzantine attackers using their own data. %The transmit signal of the $n$-th Byzantine attacker $\dot{\mathbf{g}}_{n,t}$ is given by
%\begin{align}\label{eq:normalizedlocalre}
% \dot{\mathbf{g}}_{n,t}=\frac{-\mathbf{g}_{n,t}-\bar{g}_{t}}{\epsilon_{t}}.
%\end{align}
%where $\mathbf{g}_{n,t}$ is the gradient computed by the Byzantine attackers using their own data.
On the other hand, the Byzantine attackers can report any values of $\hat{\mathbf{g}}_{n,t}$ as their gradient updates to the PS so as to skew FL.  The transmit power $\hat{p}_{n,t}$ of the $n$-th Byzantine attackers satisfies
\begin{align}\label{eq:powerConstraintByzantine}
\mathbb{E}[\|\hat{p}_{n,t}\hat{\mathbf{g}}_{n,t}\|^2]\leq p_n^{\max}, \ \ \forall n.
\end{align}

Consider block fading channels, where the wireless channels remain unchanged within each iteration in FL but may change independently from one iteration to another.
We define the duration of one iteration as one time block, indexed by $t$. At the $t$-th iteration, the received signal at the PS is given by
\begin{align}\label{eq:receivedsignals}
  \mathbf{y}_{t}&=\sum_{m=1}^M p_{m,t}|h_{m,t}|\tilde{\mathbf{g}}_{m,t}+\sum_{n=1}^N \hat{p}_{n,t}|h_{n,t}|\hat{\mathbf{g}}_{n,t}+\mathbf{z}_{t},
 \end{align}
where the first, second, and third terms correspond to normal workers, attackers and noise, respectively. In particular, $|h_{i,t}|$ is the channel gain from the $i$-th worker to the PS at the $t$-th iteration and $\mathbf{z}_{t} \sim \mathcal{N}(0,z^2\mathbf{I})$ is additive white Gaussian noise (AWGN) that is independent of the gradient updates. The channels follow independent Rayleigh fading, i.e., $h_{i,t}\sim \mathcal{CN}(0,\sigma_{i}^2)$. In this work, we assume that the channels are perfectly known at local workers and the PS. With perfect channel state information (CSI), the channel phase offset is compensated at the local workers before they transmit their gradient updates.

%where we assume all the local worker has the

After receiving the signals $\mathbf{y}_{t}$ in \eqref{eq:receivedsignals} from the local workers, the PS performs de-standardization to get the estimated aggregated gradient by inverting the standardization of \eqref{eq:normalizedlocal} as follows:
\begin{small}
\begin{align}
  \tilde{\mathbf{g}}_{t}=&\epsilon_{t}\mathbf{y}_{t} +\left(\sum_{i=1}^Up_{i,t}|h_{i,t}|\right)\bar{g}_{t}\mathbf{1} \nonumber\\=&\epsilon_{t}\left(\sum_{m=1}^M p_{m,t}|h_{m,t}|\tilde{\mathbf{g}}_{m,t}+\sum_{n=1}^N \hat{p}_{n,t}|h_{n,t}|\hat{\mathbf{g}}_{n,t}+\mathbf{z}_{t}\right)\nonumber\\ &+\left(\sum_{i=1}^Up_{i,t}|h_{i,t}|\right)\bar{g}_{t}\mathbf{1}
   \nonumber\\=&\epsilon_{t}\left(\sum_{m=1}^M p_{m,t}|h_{m,t}|\frac{\mathbf{g}_{m,t}-\bar{g}_{t}\mathbf{1}}{\epsilon_{t}}+\sum_{n=1}^N \hat{p}_{n,t}|h_{n,t}|\hat{\mathbf{g}}_{n,t}+\mathbf{z}_{t}\right)\nonumber\\ & +\left(\sum_{i=1}^Up_{i,t}|h_{i,t}|\right)\bar{g}_{t}\mathbf{1}
  \nonumber\\=&\sum_{m=1}^M p_{m,t}|h_{m,t}|\mathbf{g}_{m,t}+\epsilon_{t}\sum_{n=1}^N \hat{p}_{n,t}|h_{n,t}|\hat{\mathbf{g}}_{n,t} \nonumber\\ & +\left(\sum_{n=1}^Np_{n,t}|h_{n,t}|\right)\bar{g}_{t}\mathbf{1}+\epsilon_{t}\mathbf{z}_{t},\label{eq:de-normalization}
 \end{align}
 \end{small}where the first term corresponds to the aggregated gradients from normal local workers, the second plus the third terms denote the malignant contributions of Byzantine attackers to the gradient update, and the final term is from the noise.

By using the estimated aggregated gradient, the global model parameters are updated at the $t$-th iteration by
\begin{small}
\begin{align}\label{eq:modelupdate}
  \text{(updating with estimated gradients)}\quad \mathbf{w}_{t}&=\mathbf{w}_{t-1}-\alpha \tilde{\mathbf{g}}_{t}.
\end{align}
\end{small}
~Next, we discuss %provide
two transmit power allocation schemes for the design of $p_{i,t}$ that are adopted by %for
normal local workers: the existing channel-inversion (CI) transmission \cite{zhu2019broadband,zhu2020one} and our proposed best effort voting (BEV) scheme.

\subsubsection{Channel-Inversion Transmission Scheme}

%Assuming
Given perfect known CSI, in the CI scheme\cite{zhu2019broadband,zhu2020one}, channels are inverted by power control so that gradient parameters transmitted by different local workers are received with identical amplitudes, which leads to %achieving
amplitude alignment at the PS.
The transmit power of the $i$-th local worker is given by $p^2_{i,t}=\frac{b_t^2}{|h_{i,t}|^2}, \forall i,$
%\begin{align}\label{eq:powerchannel-inversion}
%p^2_{i,t}=\frac{b_t^2}{|h_{i,t}|^2},\quad \forall i,
%\end{align}
where $b_t^2=\min\{\frac{P_i^{\max}}{D}|h_{i,t}|^2, i=1,2,...,U\}$ is a scaling factor used to satisfy the power constraint in \eqref{eq:powerConstraint}.

It is evident that
\begin{align}\label{eq:powerchannel-inversion-bt}
\mathbb{E}[b_t^2]\geq P_0^{\max}\mathbb{E}[\min\{|h_{i,t}|^2, i=1,2,...,U\}],
\end{align}
where $P_0^{\max}=\min\{\frac{ P_i^{\max} }{D}, i=1,2,...,U\}$. Hence we can set $b_t^2=P_0^{\max}\mathbb{E}[\min\{|h_{i,t}|^2, i=1,2,...,U\}]$ for the power allocation. Since the channel coefficient is Rayleigh distributed $h_{i,t}\sim \mathcal{CN}(0,\sigma_{i}^2)$, $|h_{i,t}|^2$ follows the exponential distribution with mean $\frac{1}{\lambda_i}=2\sigma_{i}^2$. Thus, we have $\mathbb{E}[\min\{|h_{i,t}|^2, i=1,2,...,U\}]=\frac{1}{\sum_{i=1}^U \lambda_i}\doteq \lambda$.
As a result, for fulfilling the channel-inversion scheme in practice, the transmit power of the $i$-th local worker is set to
\begin{align}\label{eq:powerchannel-inversion}
p_{i,t}=\frac{b_0}{|h_{i,t}|},\quad \forall i,
\end{align}
where we set $b_0^2\doteq b_t^2=P_0^{\max}\lambda$.

\subsubsection{The Proposed Best Effort Voting Scheme}
 To counter intelligent Byzantine attackers, our idea is to let normal local workers try their best to combat the impact of potential Byzantine attacks so that FLOA converges to the right direction, which is therefore named as the best effort voting (BEV) scheme. In the BEV scheme, normal local workers transmit their local gradients by using their maximum transmit power which is independent to their CSI knowledge. The transmit power of the $i$-th local worker in BEV scheme is given by
\begin{align}\label{eq:powerBEV}
p_{i,t}=\sqrt{\frac{p_i^{\max}}{D}},\quad \forall i.
\end{align}

%In the next section, we will discuss and compare the robustness of these two different power allocation schemes in terms of the convergence behavior of FLOA in the presence of Byzantine attacks.
Different power allocation schemes have different resilience against Byzantine attackers, which we will discuss next.%Since the Byzantine attackers would send the opposite local gradients with the maximum transmit power, normal local workers should also transmit their local gradients by using their maximum transmit power.

\section{The Convergence Analysis}\label{sec:Convergence Analysis}
%\begin{equation}\label{eq:globallossf}
% \text{(Global loss function)}\quad F(\mathbf{w}; \mathcal{D})=\frac{1}{K}\sum_{i=1}^U\sum_{k=1}^{K_i} f(\mathbf{w};\mathbf{x}_{i,k},\mathbf{y}_{i,k}),
%\end{equation}
%\begin{align}\label{eq:localupdate0}
%  \text{(Global model updating)}\quad \mathbf{w}_{t}&=\mathbf{w}_{t-1}-\alpha\nabla F(\mathbf{w}_{t-1};\mathcal{D})\nonumber\\
%  &=\mathbf{w}_{t-1}-\alpha\frac{1}{K}\sum_{i=1}^U\sum_{k=1}^{K_i} \nabla f(\mathbf{w}_{t-1};\mathbf{x}_{i,k},\mathbf{y}_{i,k}),
%\end{align}
%\begin{align}\label{eq:localupdate01}
%  \text{(Local model updating)}\quad \mathbf{w}_{i,t}&=\mathbf{w}_{t-1}-\alpha\nabla F_i(\mathbf{w}_{t-1};\mathcal{D}_i)\nonumber\\
%  &=\mathbf{w}_{t-1}-\alpha\frac{1}{K_i}\sum_{k=1}^{K_i} \nabla f(\mathbf{w}_{t-1};\mathbf{x}_{i,k},\mathbf{y}_{i,k}).
%\end{align}
In this section, we compare the convergence performance of the aforementioned two power allocation schemes, CI and BEV. We first prove that there exists the strongest attack % is a worst-case attack that
where a Byzantine attacker tries its best to % can achieve to
prevent the convergence of FLOA. % over the air.
And then under such a circumstance, % worst-case attack,
we derive %provide
the convergence rate of FLOA %over the air for
when applying the two transmission schemes, respectively.

\subsection{Assumptions}
%We assume all local workers have iid data.
To facilitate the convergence analysis, we make several standard assumptions on the loss function and the local gradient estimates.
Note that %We emphasise that
our theoretical derivations do not assume convexity on the loss function. Therefore, our methodology is also % developed theory is
applicable to the popular learning models of deep neural networks (DNNs). %, since we do not assume a convex setting on the loss function.

\textbf{Assumption 1:} The loss function $F$ is Lipschitz continuous and smooth, that is,
\begin{align}
F(\mathbf{w}_{t})
\leq& F(\mathbf{w}_{t-1})+\mathbf{g}^T_t(\mathbf{w}_{t}-\mathbf{w}_{t-1})+\frac{L}{2}\|\mathbf{w}_{t}-\mathbf{w}_{t-1}\|^2,\label{Taylor}
\end{align}
where $L$ is a positive constant, referred to as the Lipschitz constant for the function $F(\cdot)$\cite{nesterov1998introductory}.

\textbf{Assumption 2:} The stochastic local gradient estimates are independent and unbiased estimates of the global gradient with the variance \cite{bernstein2018signsgd,zhu2020one}, i.e.,
\begin{align}\label{eq:unbiasedestimates}
&\mathbb{E}(\mathbf{g}_{i,t})=\mathbf{g}_t,\quad\forall i,t,\\
&\mathbb{E}(\|\mathbf{g}_{i,t}-\mathbf{g}_t\|^2)\leq \delta^2,\quad\forall i,t,\label{eq:bounded variance}
\end{align}
where we consider the standard SGD in this work. If %When
the mini-batched SGD with a size $K_b$ is applied, then the variance is bounded by $\frac{\delta^2}{K_b}$.

\textbf{Assumption 3:} The standardization factors $\bar{g}_{t}$ and $\epsilon^2_{t}$ are unbiased estimates of the global gradient with the bounded variance as follows\cite{zhu2019broadband}% computed by \eqref{eq:Globemean} and \eqref{eq:Globevar}
\begin{align}\label{eq:boundedGlobemean}
&\mathbb{E}[\bar{g}_{t}]=\frac{\sum_{d=1}^Dg_{t}^d}{D}, \quad \forall t,\\
&\epsilon_{t}\leq \epsilon, \quad \forall t.\label{eq:boundedGlobeVar}
\end{align}
%\begin{align}\label{eq:boundedGlobemean}
%&\mathbb{E}(\mathbf{g}_{i,t})=\mathbf{g}_t, \quad \forall i\\
%&\mathbb{E}(|g^d_{i,t}-g^d_t|^2)\leq (\delta^d)2, \quad \forall i, d,\label{eq:boundedGlobeVar}
%\end{align}
%where $\mathbf{\delta}=[\delta^1,...,\delta^d,...,\delta^D]$ is a vector of non-negative
%constants.

The above assumptions allow tractable convergence analysis.

\subsection{The Strongest Byzantine Attacks}
While the Byzantine attackers may send arbitrary signals, there exists the strongest %worst-case
attack that a Byzantine attacker can achieve to prevent the convergence of FLOA. % over the air.
Intuitively, %since
the Byzantine attackers would like to influence the global gradients at the PS along the opposite direction of that of normal local workers.
To this end, the Byzantine attackers will %\textcolor[rgb]{1.00,0.00,0.00}{are supposed to}
transmit $\hat{\mathbf{g}}_{n,t}=-\mathbf{g}_{n,t}$ to the PS with its maximum transmit power $\hat{p}_{n,t}$.
In particular, given the global model parameter $\mathbf{w}_{t-1}$, the Byzantine attackers compute its own %true
gradient $\mathbf{g}_{n,t}$ by using their own local data.
In addition, the transmit power $\hat{p}_{n,t}$ satisfies the maximum power constraint, i.e., $\mathbb{E}[\|\hat{p}_{n,t}\hat{\mathbf{g}}_{n,t}\|^2]= p_n^{\max}$. This is the worst case that FLOA experiences in this work and we theoretically demonstrate in the following \textbf{Theorem \ref{Theorem:Worst-case}} that it is the strongest attack that a Byzantine attacker can impose to deter the convergence of FLOA.

\begin{theorem}\label{Theorem:Worst-case}
Employing SGD for the FL system deploying analog aggregation transmission in the presence of Byzantine attackers, the strongest attacks can be performed as
\begin{align}\label{eq:Worst-case1}
\hat{\mathbf{g}}_{n,t}&=-\mathbf{g}_{n,t},\\
\hat{p}_{n,t}&=\sqrt{\frac{p_n^{\max}}{(\bar{g}_{t}^2+\epsilon_{t}^2)D}}.\label{eq:Worst-case2}
\end{align}
\end{theorem}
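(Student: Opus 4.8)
The plan is to cast the Byzantine attacker's problem as a constrained optimization over the falsified messages $\hat{\mathbf{g}}_{n,t}$ and transmit powers $\hat{p}_{n,t}$ whose objective is to maximally hinder the descent of the loss. First I would substitute the model update \eqref{eq:modelupdate} together with the de-standardized aggregate \eqref{eq:de-normalization} into the smoothness inequality \eqref{Taylor}, which yields a one-step bound of the form $F(\mathbf{w}_t)\le F(\mathbf{w}_{t-1})-\alpha\,\mathbf{g}_t^T\tilde{\mathbf{g}}_t+\tfrac{L\alpha^2}{2}\|\tilde{\mathbf{g}}_t\|^2$. Deterring convergence is then equivalent to choosing the attacker-controlled quantities so as to maximize the right-hand side, i.e., to make the first-order alignment term $\mathbf{g}_t^T\tilde{\mathbf{g}}_t$ as negative as possible while enlarging $\|\tilde{\mathbf{g}}_t\|^2$. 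Isolating the attacker's footprint in \eqref{eq:de-normalization}, namely $\epsilon_t\sum_n \hat{p}_{n,t}|h_{n,t}|\hat{\mathbf{g}}_{n,t}$ together with the mean-correction term, reduces the task to selecting the direction and magnitude of each $\hat{\mathbf{g}}_{n,t}$ subject to the power budget \eqref{eq:powerConstraintByzantine}.

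Next I would determine the worst-case direction. Taking expectations and invoking the unbiasedness in \eqref{eq:unbiasedestimates}, the contribution of the $n$-th attacker to $\mathbb{E}[\mathbf{g}_t^T\tilde{\mathbf{g}}_t]$ is governed by $\mathbb{E}[\mathbf{g}_t^T\hat{\mathbf{g}}_{n,t}]$. For a fixed message energy $\|\hat{\mathbf{g}}_{n,t}\|$, a Cauchy--Schwarz (equivalently, Lagrangian) argument shows this inner product is minimized by aligning $\hat{\mathbf{g}}_{n,t}$ with $-\mathbf{g}_t$. Since the attacker cannot observe $\mathbf{g}_t$ but can compute its own i.i.d.\ local gradient $\mathbf{g}_{n,t}$, which is an unbiased estimate of $\mathbf{g}_t$ by Assumption 2, the optimal choice in expectation is $\hat{\mathbf{g}}_{n,t}\propto-\mathbf{g}_{n,t}$; absorbing the positive scalar into the transmit power yields the stated $\hat{\mathbf{g}}_{n,t}=-\mathbf{g}_{n,t}$ of \eqref{eq:Worst-case1}. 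I would also observe that this same anti-gradient direction simultaneously inflates $\|\tilde{\mathbf{g}}_t\|^2$, so the first- and second-order terms in the bound push in the same direction and there is no conflicting trade-off to resolve.

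It then remains to fix the magnitude, i.e., the power. Because enlarging the anti-gradient component only increases the right-hand side of the one-step bound, the attacker should exhaust its budget, turning \eqref{eq:powerConstraintByzantine} into the equality $\mathbb{E}[\|\hat{p}_{n,t}\hat{\mathbf{g}}_{n,t}\|^2]=p_n^{\max}$. Plugging in $\hat{\mathbf{g}}_{n,t}=-\mathbf{g}_{n,t}$ and using the mean--variance decomposition $\|\mathbf{g}_{n,t}\|^2=\sum_{d=1}^D (g_{n,t}^d)^2=D(\bar{g}_{n,t}^2+\epsilon_{n,t}^2)$, together with the fact that the attacker reports its true statistics which, by the i.i.d.\ data model and Assumption 3, coincide in expectation with the broadcast global factors $\bar{g}_t$ and $\epsilon_t$, gives $\|\hat{\mathbf{g}}_{n,t}\|^2=D(\bar{g}_t^2+\epsilon_t^2)$. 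Solving $\hat{p}_{n,t}^2\,D(\bar{g}_t^2+\epsilon_t^2)=p_n^{\max}$ for $\hat{p}_{n,t}$ then produces \eqref{eq:Worst-case2}.

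The hard part will be making the joint optimality of direction and power rigorous rather than heuristic: the objective involves an expectation over both the random channels (known to the attacker through CSI) and the stochastic gradients, so I must verify that minimizing $\mathbb{E}[\mathbf{g}_t^T\tilde{\mathbf{g}}_t]$ and maximizing $\mathbb{E}[\|\tilde{\mathbf{g}}_t\|^2]$ are genuinely attained at the same pair $(\hat{\mathbf{g}}_{n,t},\hat{p}_{n,t})$. The cleanest route is a monotonicity argument, showing that for any fixed direction the bound is increasing in attack power and that among all unit directions the anti-gradient one maximizes both the alignment penalty and the norm, so the constrained maximizer sits at full power along $-\mathbf{g}_{n,t}$. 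A secondary subtlety is justifying the replacement of the local standardization statistics $(\bar{g}_{n,t},\epsilon_{n,t})$ by the global ones $(\bar{g}_t,\epsilon_t)$ in the power expression, which I would defend through the i.i.d.\ assumption and the unbiasedness in \eqref{eq:boundedGlobemean}.
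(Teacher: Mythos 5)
Your proposal follows essentially the same route as the paper's proof in Appendix~\ref{Appendix B}: substitute the de-standardized update into the smoothness inequality, take expectations to isolate the descent condition $\sum_{m} p_{m,t}|h_{m,t}|\|\mathbf{g}_t\|^2+\sum_n \hat{p}_{n,t}|h_{n,t}|\mathbf{g}_t^T\hat{\mathbf{g}}_{n,t}>0$, argue the attacker should send $-\mathbf{g}_{n,t}$ at full power to make $\mathbb{E}[\mathbf{g}_t^T\hat{\mathbf{g}}_{n,t}]=-\|\mathbf{g}_t\|^2$, and then solve $\hat{p}_{n,t}^2 D(\epsilon_t^2+\bar{g}_t^2)=p_n^{\max}$ for the power. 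The ``hard parts'' you flag (joint optimality of direction and power, and replacing local by global standardization statistics) are in fact left at the same heuristic level in the paper itself, so your outline matches and if anything proposes to be slightly more rigorous.
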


\begin{proof}
The proof of \textbf{Theorem \ref{Theorem:Worst-case}} is provide in Appendix \ref{Appendix B}.
\end{proof}

Since the aforementioned strongest attack has been proved as the worst case that FLOA can experience, next we will %  worst-case attacks in the following convergence analysis so as to
evaluate the defense efficiency of different transmission schemes via convergence analysis.
We adopt the well known strategy of relating the norm of the gradient to the expected improvement to show the convergence for non-convex optimization\cite{wang2018cooperative,bernstein2018signsgd,zhu2020one}, i.e,
\begin{align}\label{eq:convergenceshow}
\min_{0,1,...,T}\mathbb{E}[\|\mathbf{g}_t\|^2]\leq\mathbb{E}\left[\sum_{t=1}^{T}\frac{1}{T}\|\mathbf{g}_t\|^2\right]\leq \mathcal{O}(\frac{1}{T^q}),
\end{align}
where $q>0$ is the order of the total number of the iterations $T$. As we can see, if \eqref{eq:convergenceshow} holds, the norm of the gradient is expected to converge to 0 as $T$ increases to infinity, which means that FL converges asymptotically. The convergence rate depends on the order value $q$, which is a key parameter to be assessed next.

\subsection{The Convergence of SGD with CI Transmission}
With %perfect
CSI at each local worker, the CI power control can be %accurately
performed as \eqref{eq:powerchannel-inversion}. The
resultant convergence rate of the CI transmission scheme under the strongest attacks %worst-case attacks
is derived
as follows.
\begin{theorem}\label{Theorem:ConvergenceCI}
For a FLOA system with SGD-based model updating, CI-based power control for normal workers, and $N$ Byzantine attackers taking the strongest attacks as in \eqref{eq:Worst-case1}-\eqref{eq:Worst-case2}, the convergence rate is given by
\begin{align}\label{eq:ConvergenceCI}
\mathbb{E}[\sum_{t=1}^{T}\frac{1}{T}\|\mathbf{g}_t\|^2)]\leq&  \frac{1}{\sqrt{T}}\left(\frac{2L\Omega_{CI}}{\omega_{CI}^2\bar{\alpha}}(F(\mathbf{w}_{0}) -F(\mathbf{w}^*)) \right. \nonumber\\&\left.+\bar{\alpha}\left(\delta^2+\frac{1}{\Omega_{CI}}\epsilon^2 z^2\right)\right),
\end{align}
where
\begin{align}\label{TheoremTayloromegaCIs}
\omega_{CI}&=Mb_0-\sum_{n=1}^N \sqrt{\frac{\pi \sigma_n^2p_n^{\max}}{2D}},\\
\Omega_{CI}&=(U+N)\left(Ub_0^2+\sum_{n=1}^N \frac{2\sigma_n^2 p_n^{\max}}{D}\right),\label{TheoremTayloromegaCIL}
\end{align}
and $\bar{\alpha}=\frac{L\Omega_{CI}\sqrt{T}}{\omega_{CI}}\alpha$ is a positive constant satisfying $\bar{\alpha}<2\sqrt{T}$, and $b_0$ is initialized as in \eqref{eq:powerchannel-inversion}. The convergence is guaranteed if $\frac{\alpha^2L}{2}\Omega_{CI}-\alpha\omega_{CI}<0$, which imposes constraints on $\alpha$, $L$, $b_0$, $\sigma_n$, $p_n^{\max}$, $M$, $N$, $D$.
\end{theorem}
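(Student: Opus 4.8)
The plan is to run the standard non-convex SGD descent argument, feeding in the explicit analog-aggregation update \eqref{eq:de-normalization} specialized to CI power control and to the strongest attack of Theorem~\ref{Theorem:Worst-case}. Starting from the smoothness inequality \eqref{Taylor} together with the update rule \eqref{eq:modelupdate}, I substitute $\mathbf{w}_t-\mathbf{w}_{t-1}=-\alpha\tilde{\mathbf{g}}_t$ and take expectation over the gradient sampling, the Rayleigh fading, and the noise to obtain a one-step inequality of the form
\begin{align}
\mathbb{E}[F(\mathbf{w}_t)]\le F(\mathbf{w}_{t-1})-\alpha\,\mathbf{g}_t^T\mathbb{E}[\tilde{\mathbf{g}}_t]+\tfrac{L\alpha^2}{2}\mathbb{E}[\|\tilde{\mathbf{g}}_t\|^2]. \nonumber
\end{align}
Everything then reduces to lower-bounding the signal inner product (which surfaces $\omega_{CI}$) and upper-bounding the second moment (which surfaces $\Omega_{CI}$).

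For the signal term I exploit the CI identity $p_{m,t}|h_{m,t}|=b_0$, so the honest contribution to $\mathbb{E}[\tilde{\mathbf{g}}_t]$ is $b_0 M\mathbf{g}_t$ by the unbiasedness in \eqref{eq:unbiasedestimates}. The adversarial term $-\epsilon_t\hat p_{n,t}|h_{n,t}|\mathbf{g}_{n,t}$ is handled by (i) independence of channel and gradient together with the Rayleigh mean $\mathbb{E}[|h_{n,t}|]=\sqrt{\pi\sigma_n^2/2}$, (ii) the data-dependent power bound $\epsilon_t\hat p_{n,t}=\tfrac{\epsilon_t}{\sqrt{\bar g_t^2+\epsilon_t^2}}\sqrt{p_n^{\max}/D}\le\sqrt{p_n^{\max}/D}$ from \eqref{eq:Worst-case2}, and (iii) $\mathbf{g}_t^T\mathbb{E}[\mathbf{g}_{n,t}]=\|\mathbf{g}_t\|^2$. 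The de-standardization offset $\big(\sum_n p_{n,t}|h_{n,t}|\big)\bar g_t\mathbf{1}$ contributes a nonnegative multiple of $\big(\sum_d g_t^d\big)^2/D\ge 0$ by Assumption~3, so it is dropped in a lower bound. Collecting terms yields $\mathbf{g}_t^T\mathbb{E}[\tilde{\mathbf{g}}_t]\ge\omega_{CI}\|\mathbf{g}_t\|^2$ with $\omega_{CI}$ as in \eqref{TheoremTayloromegaCIs}.

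For the second moment I expand $\mathbb{E}[\|\tilde{\mathbf{g}}_t\|^2]$, peel off the independent zero-mean noise $\epsilon_t\mathbf{z}_t$ (contributing the additive $\epsilon^2 z^2$ via $\epsilon_t\le\epsilon$ from \eqref{eq:boundedGlobeVar}), and apply the Cauchy--Schwarz bound $\|\sum_{k=1}^{U+N}\mathbf{a}_k\|^2\le(U+N)\sum_k\|\mathbf{a}_k\|^2$ over the remaining $M$ honest, $N$ adversarial-gradient, and $N$ offset components to produce the $(U+N)$ prefactor. Each block is then bounded: honest terms give $b_0^2(\|\mathbf{g}_t\|^2+\delta^2)$ via the bounded variance of Assumption~2; adversarial terms give $\tfrac{2\sigma_n^2 p_n^{\max}}{D}(\|\mathbf{g}_t\|^2+\delta^2)$ via $\mathbb{E}[|h_{n,t}|^2]=2\sigma_n^2$ and the same power bound; the offset folds into the $Ub_0^2$ piece using $\bar g_t^2 D\le\|\mathbf{g}_t\|^2$. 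This gives $\mathbb{E}[\|\tilde{\mathbf{g}}_t\|^2]\le\Omega_{CI}(\|\mathbf{g}_t\|^2+\delta^2)+\epsilon^2 z^2$ with $\Omega_{CI}$ as in \eqref{TheoremTayloromegaCIL}. Substituting both bounds isolates the coefficient $\alpha\omega_{CI}-\tfrac{L\alpha^2}{2}\Omega_{CI}$ of $\|\mathbf{g}_t\|^2$, positive exactly under the stated condition $\tfrac{\alpha^2 L}{2}\Omega_{CI}-\alpha\omega_{CI}<0$. Summing over $t=1,\dots,T$, telescoping $F(\mathbf{w}_0)-\mathbb{E}[F(\mathbf{w}_T)]\le F(\mathbf{w}_0)-F(\mathbf{w}^*)$, dividing by $T$, choosing $\alpha=\tfrac{\bar\alpha\,\omega_{CI}}{L\Omega_{CI}\sqrt{T}}$, and using the step-size condition to lower-bound that coefficient by $\tfrac12\alpha\omega_{CI}$ produces the $\tfrac{1}{\sqrt{T}}$ rate in \eqref{eq:ConvergenceCI}.

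The main obstacle will be the signal and second-moment expectations for the adversarial terms, because the attacker's power $\hat p_{n,t}$ is data-dependent through the random standardization factors $\bar g_t,\epsilon_t$ and, unlike the honest CI workers, the attacker does not invert the channel, so the Rayleigh gain $|h_{n,t}|$ and its first and second moments survive into the bounds. Getting the constants $\sqrt{\pi\sigma_n^2 p_n^{\max}/(2D)}$ and $2\sigma_n^2 p_n^{\max}/D$ right hinges on correctly decoupling these random quantities (channel independent of gradient, $\epsilon_t/\sqrt{\bar g_t^2+\epsilon_t^2}\le1$) and on cleanly separating the honest, adversarial, offset, and noise contributions so that the single constant $\Omega_{CI}$ can be applied uniformly to both the $\|\mathbf{g}_t\|^2$ and the $\delta^2$ parts.
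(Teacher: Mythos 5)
Your proposal is correct and follows essentially the same route as the paper's Appendix C: the smoothness/descent inequality applied to the analog-aggregation update, a lower bound on $\mathbf{g}_t^T\mathbb{E}[\tilde{\mathbf{g}}_t]$ via the Rayleigh mean, the bound $\epsilon_t/\sqrt{\bar g_t^2+\epsilon_t^2}\le 1$, and dropping the nonnegative offset to surface $\omega_{CI}$; the $(U+N)$-term norm inequality with $\mathbb{E}[|h_{n,t}|^2]=2\sigma_n^2$ and the bounded-variance assumption to surface $\Omega_{CI}$; then telescoping and the substitution $\alpha=\bar\alpha\,\omega_{CI}/(L\Omega_{CI}\sqrt{T})$. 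All key constants and intermediate bounds match the paper's derivation.
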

\begin{proof}
The proof of \textbf{Theorem \ref{Theorem:ConvergenceCI}} is provide in Appendix \ref{Appendix C}.
\end{proof}
\begin{remark}
For a small learning rate, the asymptotic convergence rate is dominated by $O(\frac{\Omega_{CI}}{\omega_{CI}^2\sqrt{T}})$. In addition, the convergence condition is given by $\frac{\alpha^2L}{2}\Omega_{CI}-\alpha\omega_{CI}<0$, the proof of which is also provided in Appendix \ref{Appendix C}. This condition imposes an upper bound on the learning rate in the form $\alpha < \frac{2\omega_{CI}}{L\Omega_{CI}}$. Further, when the learning rate is set to be small enough, $\alpha^2$ approaches 0, and the FL converges under a simplified condition of $\omega_{CI}>0$. From this convergence condition, we can see that even one Byzantine attacker can destroy the FLOA, if this attacker has a very large transmit power or its channel gain is very large, e.g., if $p_n^{\max}$ or $\sigma_n^2$ for any $n$ is very large, it is hard to ensure $\omega_{CI}>0$.
\end{remark}% When $\alpha=\frac{\omega_{CI}}{L\Omega_{CI}}=\arg\max_{\alpha}T(\alpha\omega_{CI}-\frac{\alpha^2L}{2}\Omega_{CI})$, the fastest asymptotical convergence rate reaches as $O(\frac{2L\Omega_{CI}}{T\omega_{CI}^2})$.This condition implies that the FL converges under $\omega_{CI}>0$, as long as we set the learning rate to be small enough.
\begin{remark}\label{remark:SP}
For a special case where all the local workers have the same maximum power (i.e., $p_i^{\max}=p^{\max}$, $\forall i$) and the independent and identically distributed channels (i.e., $\sigma_i=\sigma$, $\forall i$), we have the convergence condition $\omega_{CI}=(\frac{M}{\sqrt{U}}-\sqrt{\frac{N^2\pi}{4}})\sqrt{\frac{2p^{\max}\sigma^2}{D}}>0$. Therefore, we conclude that the number of attackers in this special case should be no more than $\frac{U}{1+\sqrt{\pi U}}$ to make the CI scheme defend against the Byzantine attack.
\end{remark}

When there are no Byzantine attackers, i.e., $N=0$, we have the following \textbf{Lemma \ref{Lemma:NOattacksCI}}.

\begin{lemma}\label{Lemma:NOattacksCI}
Employing SGD-based model updating for a FLOA system with the CI power control for normal local workers and no Byzantine attackers, the convergence rate is given by
\begin{align}\label{eq:LemmaConvergenceCI}
\mathbb{E}[\sum_{t=1}^{T}\frac{1}{T}\|\mathbf{g}_t\|^2)]\leq&  \frac{1}{\sqrt{T}}\left(\frac{2L}{\bar{\alpha}}(F(\mathbf{w}_{0})-F(\mathbf{w}^*)) \right. \nonumber\\&\left.+\bar{\alpha}\left(\delta^2+\frac{1}{U^2b_0^2}\epsilon^2 z^2\right)\right),
\end{align}
where $\alpha=\frac{1}{LUb_0\sqrt{T}}\bar{\alpha}$.
\end{lemma}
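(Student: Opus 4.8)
The plan is to obtain Lemma 1 as the benign specialization ($N=0$, hence $M=U$) of the already-established Theorem 2, so that no new machinery is required. First I would substitute $N=0$ into the two structural constants defined in \eqref{TheoremTayloromegaCIs}--\eqref{TheoremTayloromegaCIL}. Since the attacker sums $\sum_{n=1}^{N}(\cdot)$ are empty, this gives $\omega_{CI}=Ub_0$ and $\Omega_{CI}=U\cdot Ub_0^2=U^2b_0^2$. The crucial observation is that in the attack-free regime these two constants are no longer independent: one has the collapse $\Omega_{CI}=\omega_{CI}^2$.

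Feeding this identity into the bound \eqref{eq:ConvergenceCI} of Theorem 2 is then a matter of direct simplification. The leading coefficient becomes $\frac{2L\Omega_{CI}}{\omega_{CI}^2\bar\alpha}=\frac{2L\,\omega_{CI}^2}{\omega_{CI}^2\bar\alpha}=\frac{2L}{\bar\alpha}$, the noise weight becomes $\frac{1}{\Omega_{CI}}=\frac{1}{U^2b_0^2}$, and the variance term $\delta^2$ is unaffected, which reproduces \eqref{eq:LemmaConvergenceCI} exactly. I would also check that the learning-rate reparametrization is consistent: setting $N=0$ in $\bar\alpha=\frac{L\Omega_{CI}\sqrt T}{\omega_{CI}}\alpha$ yields $\bar\alpha=\frac{LU^2b_0^2\sqrt T}{Ub_0}\alpha=LUb_0\sqrt T\,\alpha$, i.e. $\alpha=\frac{\bar\alpha}{LUb_0\sqrt T}$, matching the statement.

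For completeness, or to avoid invoking the full Appendix~C derivation, I would alternatively redo the descent argument directly, which is considerably shorter here because every Byzantine term disappears. With $N=0$ and CI power control the de-standardized aggregate \eqref{eq:de-normalization} reduces to $\tilde{\mathbf{g}}_t=b_0\sum_{i=1}^{U}\mathbf{g}_{i,t}+\epsilon_t\mathbf{z}_t$, using the amplitude alignment $p_{i,t}|h_{i,t}|=b_0$. Inserting $\mathbf{w}_t-\mathbf{w}_{t-1}=-\alpha\tilde{\mathbf{g}}_t$ into the smoothness inequality \eqref{Taylor} and taking expectations, Assumption~2 turns the linear term into $-\alpha b_0U\|\mathbf{g}_t\|^2$ by unbiasedness and bounds the quadratic term via $\mathbb{E}[\|\sum_{i}\mathbf{g}_{i,t}\|^2]\leq U^2\|\mathbf{g}_t\|^2+U\delta^2$ by independence and bounded variance, while the noise is handled through independence of $\mathbf{z}_t$ (vanishing cross term) together with $\epsilon_t\leq\epsilon$. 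This produces a one-step bound of the form $(\alpha\omega_{CI}-\tfrac{L\alpha^2}{2}\Omega_{CI})\|\mathbf{g}_t\|^2\leq\mathbb{E}[F(\mathbf{w}_{t-1})-F(\mathbf{w}_t)]+\tfrac{L\alpha^2}{2}(\cdots)$ with the same $\omega_{CI},\Omega_{CI}$ as above; summing over $t=1,\dots,T$, telescoping $F$, using $F(\mathbf{w}_T)\geq F(\mathbf{w}^*)$, and dividing by the positive coefficient recovers \eqref{eq:LemmaConvergenceCI} after substituting $\alpha$.

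The step I expect to need the most care is algebraic rather than conceptual: after telescoping, the denominator carries the factor $\omega_{CI}-\tfrac{L\alpha}{2}\Omega_{CI}=\omega_{CI}(1-\tfrac{\bar\alpha}{2\sqrt T})$, and collapsing the leading constant to exactly $\frac{2L}{\bar\alpha}$ relies on the stated admissibility condition $\bar\alpha<2\sqrt T$ to keep this factor positive and to bound it suitably. The genuine content of the lemma, however, is the single identity $\Omega_{CI}=\omega_{CI}^2$ at $N=0$; once that is established, the remainder is routine bookkeeping, and I would therefore lead with the specialization route and relegate the direct re-derivation to a remark.
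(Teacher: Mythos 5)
Your proposal is correct and its primary route is exactly the paper's own proof: set $N=0$ (so $M=U$) in Theorem~\ref{Theorem:ConvergenceCI}, observe the collapse $\Omega_{CI}=\omega_{CI}^2=U^2b_0^2$, and substitute into \eqref{eq:ConvergenceCI} together with the reparametrized learning rate $\alpha=\frac{\bar{\alpha}}{LUb_0\sqrt{T}}$. The additional direct re-derivation you sketch is consistent but unnecessary, as the paper likewise treats the lemma as an immediate specialization.
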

\begin{proof}
When $N=0$, we have $\omega^2_{CI}=\Omega_{CI}$. Then setting $\alpha=\frac{\omega_{CI}}{L\Omega_{CI}\sqrt{T}}\bar{\alpha}=\frac{1}{LUb_0\sqrt{T}}\bar{\alpha}$, substituting $\alpha$, $\omega_{CI}$ and $\Omega_{CI}$ into \eqref{eq:ConvergenceCI}, we complete the proof.
\end{proof}
\begin{remark}
As we can see from \eqref{eq:LemmaConvergenceCI}, in the case of CI power control without Byzantine attackers, we get the fastest asymptotic convergence rate as $O(\frac{1}{\sqrt{T}})$, which is the same as the error-free (EF) case where we do not consider the influence of wireless channels and noises.
\end{remark}

\subsection{The Convergence of SGD with BEV Transmission}
For our BEV transmission scheme under the strongest %worst-case
attacks, the resultant convergence rate is derived as following \textbf{Theorem \ref{Theorem:ConvergenceBEV}}.

\begin{theorem}\label{Theorem:ConvergenceBEV}
Employing SGD-based model updating for a FLOA system with the BEV power control for normal workers and $N$ Byzantine attackers taking the strongest attacks as in \eqref{eq:Worst-case1}-\eqref{eq:Worst-case2}, the convergence rate is given by
\begin{align}
\mathbb{E}[\sum_{t=1}^{T}\frac{1}{T}\|\mathbf{g}_t\|^2)]\leq&  \frac{1}{\sqrt{T}}\left(\frac{2L\Omega_{BEV}}{\bar{\alpha}\omega_{BEV}^2}(F(\mathbf{w}_{0}) -F(\mathbf{w}^*)) \right. \nonumber\\&\left.+\bar{\alpha}\left(\delta^2+\frac{1}{\Omega_{BEV}}\epsilon^2 z^2\right)\right),\label{eq:ConvergenceBEV}
\end{align}
where
\begin{align}\label{TheoremTayloromegaBEVs}
\omega_{BEV}&=\sum_{i=1}^M \sqrt{\frac{p_i^{\max}\pi}{2D}}\sigma_i-\sum_{n=1}^N \sqrt{\frac{p_n^{\max}\pi}{2D}}\sigma_n,\\
\Omega_{BEV}&=(U+N)\sum_{i=1}^U\frac{2\sigma_i^2 p_i^{\max}}{D},\label{TayloromegaBEVL}
\end{align}
and $\bar{\alpha}=\frac{L\Omega_{BEV}\sqrt{T}}{\omega_{BEV}}\alpha$ is a positive constant satisfying $\bar{\alpha}<2\sqrt{T}$. The convergence is guaranteed if $\frac{\alpha^2L}{2}\Omega_{BEV}-\alpha\omega_{BEV}<0$, which imposes constraints on $\alpha$, $L$, $\sigma_i$, $p_i^{\max}$, $M$, $N$, $D$.
\end{theorem}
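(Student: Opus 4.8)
The plan is to mirror the one-step descent argument used for \textbf{Theorem \ref{Theorem:ConvergenceCI}} in Appendix \ref{Appendix C}, adapting it to the BEV power allocation \eqref{eq:powerBEV}. First I would invoke the smoothness inequality \eqref{Taylor} together with the model update $\mathbf{w}_t-\mathbf{w}_{t-1}=-\alpha\tilde{\mathbf{g}}_t$ from \eqref{eq:modelupdate} to obtain the one-step bound
\[
\mathbb{E}[F(\mathbf{w}_t)]\le \mathbb{E}[F(\mathbf{w}_{t-1})]-\alpha\,\mathbb{E}[\mathbf{g}_t^T\tilde{\mathbf{g}}_t]+\tfrac{L\alpha^2}{2}\,\mathbb{E}[\|\tilde{\mathbf{g}}_t\|^2],
\]
where the expectation is over the Rayleigh channels, the AWGN, and the SGD sampling. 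Substituting $\tilde{\mathbf{g}}_t$ from \eqref{eq:de-normalization} with $p_{m,t}=\sqrt{p_m^{\max}/D}$ and the strongest attack \eqref{eq:Worst-case1}--\eqref{eq:Worst-case2}, the whole proof reduces to lower bounding the first-order term and upper bounding the second-order term. The two channel moments that drive everything are $\mathbb{E}[|h_{i,t}|]=\sigma_i\sqrt{\pi/2}$ and $\mathbb{E}[|h_{i,t}|^2]=2\sigma_i^2$; this is the structural departure from the CI case, where $p_{m,t}|h_{m,t}|$ is equalized to the constant $b_0$, whereas here $\sqrt{p_m^{\max}/D}\,|h_{m,t}|$ stays random.

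For the first-order term, using unbiasedness $\mathbb{E}[\mathbf{g}_{i,t}]=\mathbf{g}_t$ from \eqref{eq:unbiasedestimates} and channel/gradient independence, each normal worker contributes $+\sigma_m\sqrt{\pi p_m^{\max}/(2D)}\,\|\mathbf{g}_t\|^2$, summing to the first sum in $\omega_{BEV}$. The attacker's falsified-gradient term carries the factor $\epsilon_t\hat{p}_{n,t}=\tfrac{\epsilon_t}{\sqrt{\bar g_t^2+\epsilon_t^2}}\sqrt{p_n^{\max}/D}$, which I would bound via $\epsilon_t^2\le \bar g_t^2+\epsilon_t^2$, i.e. $\epsilon_t\hat{p}_{n,t}\le\sqrt{p_n^{\max}/D}$, so that its negative contribution is no worse than $-\sigma_n\sqrt{\pi p_n^{\max}/(2D)}\,\|\mathbf{g}_t\|^2$; the attacker's de-standardization residual is nonnegative in expectation (it is proportional to $\bar g_t^2$) and is simply dropped. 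Collecting these gives $\mathbb{E}[\mathbf{g}_t^T\tilde{\mathbf{g}}_t]\ge \omega_{BEV}\|\mathbf{g}_t\|^2$ with $\omega_{BEV}$ as in \eqref{TheoremTayloromegaBEVs}.

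For the second-order term I would apply $\|\sum_j\mathbf{a}_j\|^2\le(U+N)\sum_j\|\mathbf{a}_j\|^2$ over the $M+2N=U+N$ signal components and treat the noise $\epsilon_t\mathbf{z}_t$ separately. The central identity is $\|\mathbf{g}_t\|^2=D(\bar g_t^2+\epsilon_t^2)$, which makes the attacker's power normalization collapse cleanly: the falsified-gradient magnitude reduces to $2\sigma_n^2 p_n^{\max}\epsilon_t^2$ and, via $\epsilon_t\le\epsilon$ from \eqref{eq:boundedGlobeVar}, is routed into the constant (noise-floor) part, while the residual-bias magnitude $2\sigma_n^2 p_n^{\max}\bar g_t^2\le\tfrac{2\sigma_n^2 p_n^{\max}}{D}\|\mathbf{g}_t\|^2$ supplies the attacker part of the $\|\mathbf{g}_t\|^2$ coefficient. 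Together with the normal-worker terms $\tfrac{2\sigma_m^2 p_m^{\max}}{D}\mathbb{E}[\|\mathbf{g}_{m,t}\|^2]$ and the variance bound $\mathbb{E}[\|\mathbf{g}_{m,t}\|^2]\le\|\mathbf{g}_t\|^2+\delta^2$ of Assumption 2, the proportional parts assemble into $\Omega_{BEV}\|\mathbf{g}_t\|^2$ with $\Omega_{BEV}$ as in \eqref{TayloromegaBEVL} (bounding $\sum_m$ by $\sum_{i=1}^U$ where needed), and the remaining constants collect into $\Omega_{BEV}\delta^2+\epsilon^2 z^2$.

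The last step is routine bookkeeping: substituting the two bounds yields $\mathbb{E}[F(\mathbf{w}_t)]\le\mathbb{E}[F(\mathbf{w}_{t-1})]-(\alpha\omega_{BEV}-\tfrac{L\alpha^2}{2}\Omega_{BEV})\mathbb{E}[\|\mathbf{g}_t\|^2]+\tfrac{L\alpha^2}{2}(\Omega_{BEV}\delta^2+\epsilon^2 z^2)$. Imposing the convergence condition $\tfrac{L\alpha^2}{2}\Omega_{BEV}-\alpha\omega_{BEV}<0$ (which also lets me lower bound the coefficient by $\tfrac{\alpha\omega_{BEV}}{2}$), telescoping over $t=1,\dots,T$ with $\mathbb{E}[F(\mathbf{w}_T)]\ge F(\mathbf{w}^*)$, and finally setting $\alpha=\tfrac{\omega_{BEV}}{L\Omega_{BEV}\sqrt T}\bar\alpha$ with $\bar\alpha<2\sqrt T$ delivers the claimed $O(1/\sqrt T)$ rate \eqref{eq:ConvergenceBEV}, exactly paralleling \eqref{eq:ConvergenceCI}. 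I expect the main obstacle to be the attacker bookkeeping: since the attackers transmit un-standardized gradients, the PS de-standardization in \eqref{eq:de-normalization} leaves a residual bias term that does not cancel, and the delicate point is to split the attacker's influence correctly --- routing the falsified-gradient magnitude (which collapses to an $\epsilon_t^2$-scaled constant under \eqref{eq:Worst-case2}) into the noise floor while letting the residual-bias term carry precisely the $\sigma_n$-weighted $\|\mathbf{g}_t\|^2$ coefficient --- so that the coefficients land on exactly $\omega_{BEV}$ and $\Omega_{BEV}$ rather than a looser expression.
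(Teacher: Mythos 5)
Your overall strategy is exactly the paper's (Appendix \ref{Appendix D}): the descent lemma with $\mathbf{w}_t-\mathbf{w}_{t-1}=-\alpha\tilde{\mathbf{g}}_t$, the first-order term lower-bounded via $\mathbb{E}[|h_{i,t}|]=\sigma_i\sqrt{\pi/2}$ together with $\epsilon_t\hat p_{n,t}\le\sqrt{p_n^{\max}/D}$ and dropping the nonnegative $\mathbf{g}_t^T\mathbb{E}[\bar g_t]\mathbf{1}$ term, the second-order term bounded by $\|\sum_j\mathbf{a}_j\|^2\le(U+N)\sum_j\|\mathbf{a}_j\|^2$ over the $M+2N$ signal components with $\mathbb{E}[|h_{i,t}|^2]=2\sigma_i^2$, then telescoping and the substitution $\alpha=\frac{\omega_{BEV}}{L\Omega_{BEV}\sqrt T}\bar\alpha$. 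All of that matches the paper step for step, and your treatment of $\omega_{BEV}$ is correct.

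The one place your proposal deviates --- and it is precisely the ``delicate point'' you flag --- is the routing of the attacker's falsified-gradient second moment, and as written it does not land on \eqref{eq:ConvergenceBEV}. You send the $\|\mathbf{g}_t\|^2$-proportional part of that term, namely $(U+N)\sum_{n=1}^{N} 2\sigma_n^2p_n^{\max}\epsilon_t^2$, into the noise floor via $\epsilon_t\le\epsilon$. But this quantity cannot be absorbed into $\Omega_{BEV}\delta^2+\epsilon^2z^2$: it is a multiple of neither $\delta^2$ nor $z^2$, so your final bound carries an extra additive constant $\frac{\alpha^2L}{2}(U+N)\sum_{n=1}^{N}2\sigma_n^2p_n^{\max}\epsilon^2$ per iteration that the theorem does not have. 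The correct move is to keep the identity in play rather than discharging it: write the falsified term as $\frac{\epsilon_t^2}{\epsilon_t^2+\bar g_t^2}\cdot\frac{2\sigma_n^2p_n^{\max}}{D}\left(\|\mathbf{g}_t\|^2+\delta^2\right)$ and the residual-bias term exactly as $\frac{2\sigma_n^2p_n^{\max}}{D}\cdot D\bar g_t^2=\frac{\bar g_t^2}{\epsilon_t^2+\bar g_t^2}\cdot\frac{2\sigma_n^2p_n^{\max}}{D}\|\mathbf{g}_t\|^2$; the two fractions sum to one, so each attacker contributes exactly $\frac{2\sigma_n^2p_n^{\max}}{D}\|\mathbf{g}_t\|^2$ plus at most $\frac{2\sigma_n^2p_n^{\max}}{D}\delta^2$, and the constants close to $\Omega_{BEV}\delta^2+\epsilon^2z^2$ as claimed. (The paper's own \eqref{JensenexpectationBEV1} is slightly cavalier here too --- it loosens $\|\bar g_t\mathbf{1}\|^2$ to $\|\mathbf{g}_t\|^2$ one line too early --- but its final coefficients are the ones this exact pairing produces.) With that single repair your argument coincides with the paper's proof.
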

\begin{proof}
The proof of \textbf{Theorem \ref{Theorem:ConvergenceBEV}} is provide in Appendix \ref{Appendix D}.
\end{proof}
\begin{remark}\label{remark:BEVcondition}
The proof of the convergence condition $\frac{\alpha^2L}{2}\Omega_{BEV}-\alpha\omega_{BEV}<0$ is provided in Appendix \ref{Appendix D}. This condition imposes an upper bound on the learning rate in the form $\alpha < \frac{2\omega_{BEV}}{L\Omega_{BEV}}$. Further, when the learning rate is set to be small enough, $\alpha^2$ approaches 0, and the FL converges under a simplified condition of $\omega_{BEV}>0$. If all the attackers and normal workers are isomorphic (the same case in \emph{Remark 2}), our BEV can defend Byzantine attacks when $N\leq \frac{U}{2}$. Since $\frac{U}{2}\geq \frac{U}{1+\sqrt{\pi U}}$, our BEV scheme can defend against a larger number of Byzantine attackers than that of CI.
\end{remark}
\begin{remark}
 For a small learning rate, if both the CI scheme and our BEV scheme can converge, the asymptotic convergence rate is dominated by $O(\frac{\Omega}{\omega^2\sqrt{T}})$. The comparison between $O(\frac{\Omega_{CI}}{\omega_{CI}^2\sqrt{T}})$ and $O(\frac{\Omega_{BEV}}{\omega_{BEV}^2\sqrt{T}})$ depends on the specific parameters.  For a large learning rate, if both the CI scheme and our BEV scheme can converge, the asymptotical convergence rate is dominated by $O(\frac{1}{\Omega\sqrt{T}})$. Since $\Omega_{BEV}> \Omega_{CI}$, the convergence rate of BEV scheme is faster than that of the CI scheme.
\end{remark}
\begin{remark}\label{Re:BEVworse}
When there are no Byzantine attackers, i.e., $N=0$, we have $\omega_{BEV}^2\leq \Omega_{BEV}$. Considering a small learning rate, the asymptotic convergence rate of BEV is dominated by $O(\frac{\Omega_{BEV}}{\omega_{BEV}^2\sqrt{T}})$, which is slower than both the %channel-inversion
CI scheme and the EF case.
\end{remark}

\section{Simulation Results}
To evaluate the resilience of our proposed BEV scheme against Byzantine attacks, we provide the simulation results for an image classification task. Unless specified otherwise, the simulation settings are given as follows.
The FLOA system has $U=10$ workers. The wireless channels between the workers and the PS are modeled as i.i.d. Rayleigh fading, by generating $h_{i,t}$'s from the complex Gaussian distribution $\mathcal{CN}(0, 1)$ for different $i$ and $t$. The average receive SNR at local workers is set to be $\frac{P^{\max}_i}{Dz^2}= 10$ dB \cite{zhu2020one}.

We consider the learning task of handwritten-digit identification using the well-known MNIST dataset\footnote{http://yann.lecun.com/exdb/mnist/} that consists of 10 classes ranging from digit ``0" to ``9". In the MNIST dataset, a total of 60000 labeled training data samples and 10000 test samples. In our experiments, we train a multilayer perceptron (MLP) with a 784-neuron input layer, a 64-neuron hidden layer, and a 10-neuron softmax output layer. We adopt rectified linear unit (ReLU) as the activation function, and cross entropy as the loss function. The total number of parameters in the MLP is $D=50890$. We randomly select $3000$ distinct training samples and distribute them to all local workers as their local datasets, i.e., $K_i=\bar{K}=3000$, for any $i \in [1, U]$.

We evaluate our BEV scheme under different attacks, including 1) without any attacks, 2) only one attacker who is far from the PS, hence a weak attacker, 3) only one attacker who is close to the PS, hence a strong attacker, and 4) randomly selected several attackers. We compare with two benchmarks: 1) the CI scheme and 2) the FLOA under the ideal error-free case (EF) where we do not consider the influence of wireless channels and noise.

\subsection{Performance without Attacks}
\begin{figure}[tb]
  \centering
  \subfigure[Training loss]{\includegraphics[width=0.40\textwidth]{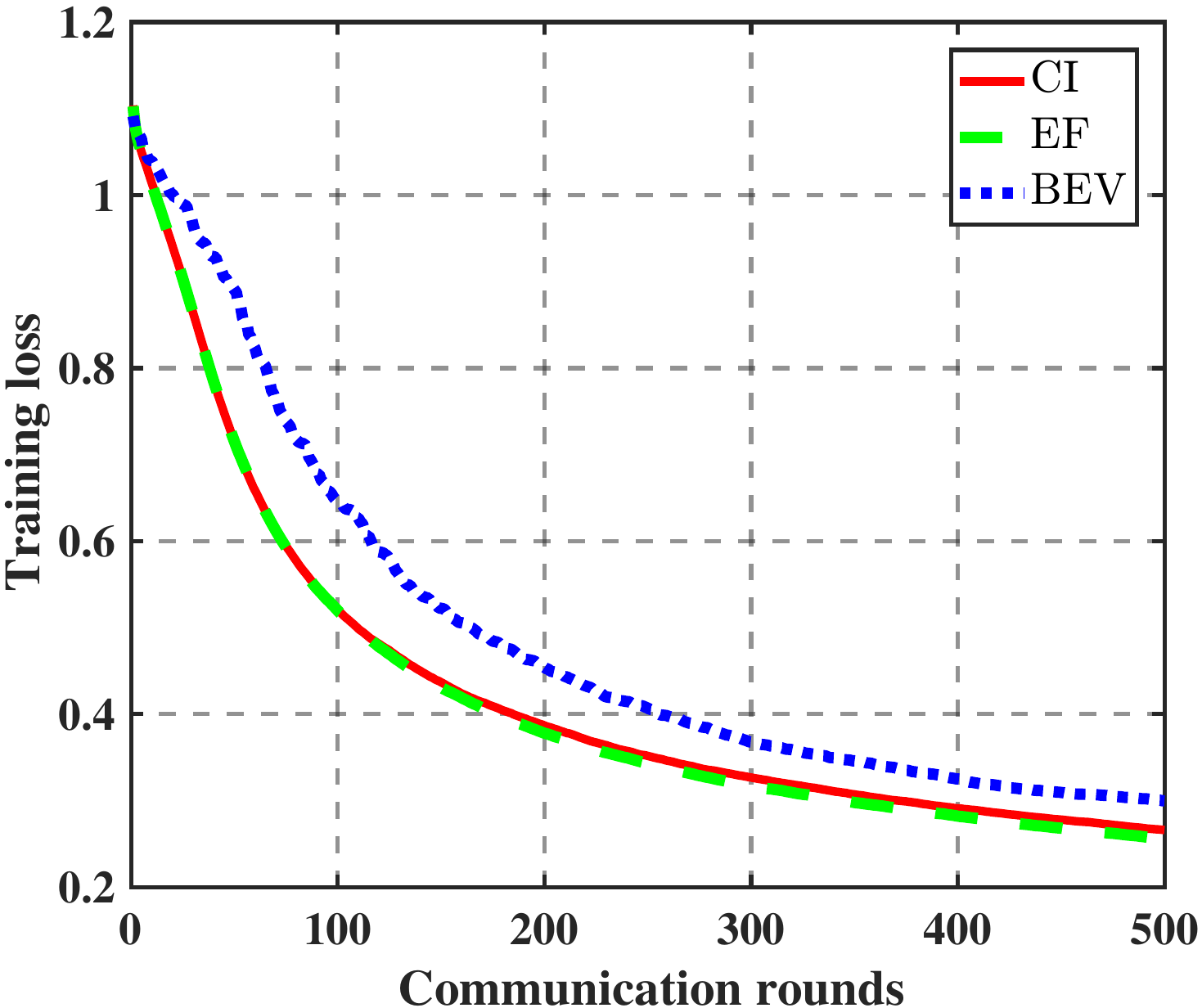}
  \label{fig:Lossatt} }
  \subfigure[Test accuracy]{\includegraphics[width=0.40\textwidth]{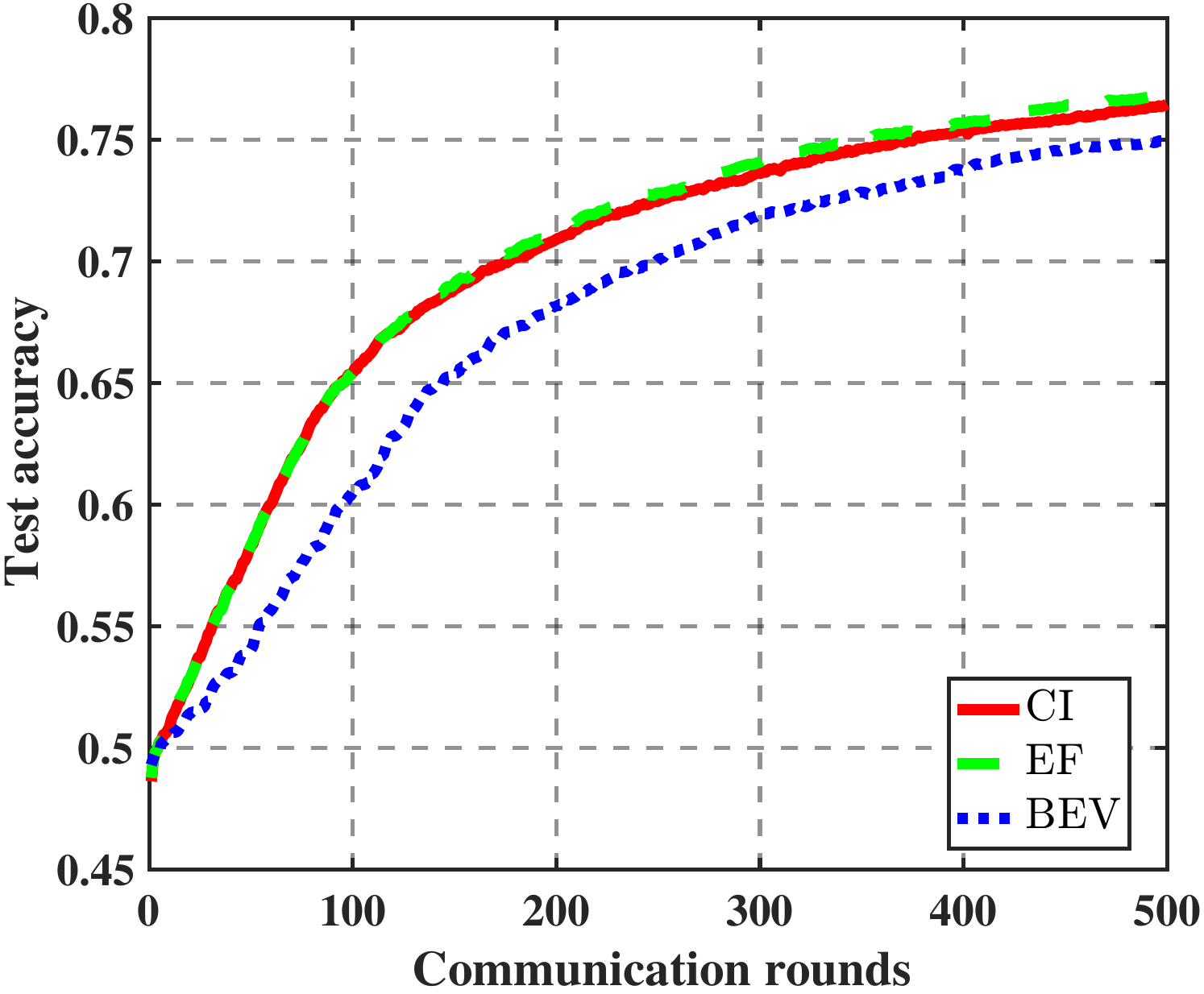}\label{fig:Accuracyatt}}
  \caption{The performance of BEV, CI and EF without Byzantine attacks.}\label{fig:withoutatt}
\end{figure}

The error-free case is set as the benchmark where the local gradients are perfectly aggregated at the PS, i.e., we set the channel $h_{i,t}=1$ and the AWGN $\mathbf{z}_t=0$. In Fig. \ref{fig:withoutatt}, we compare the performance of BEV with CI and EF without Byzantine attacks. Considering $\alpha<\frac{\omega}{L\Omega}$ in \emph{Remark 1} and \emph{Remark 4}, we set the learning rate $\alpha$ such as its scaled version is $\hat{\alpha}=\frac{\bar{\alpha}}{L\sqrt{T}}=\frac{\Omega}{\omega}\alpha=0.1$, where $\hat{\alpha}$ denotes the adjusting fact of $\alpha$. As we can see from Fig. \ref{fig:withoutatt}, the performance of CI is almost the same as EF. However, BEV experiences a 2\% performance loss compared to CI and EF. This results are in agreement with our theoretical analysis in Theorem 3, which has been discussed in Remark \ref{Re:BEVworse}. That is, CI converges a little faster than our BEV scheme, if and only if there exist no Byzantine attackers. However, practical learning applications of interest often operate in possible adversarial environments.

\subsection{Performance under a Single Attacker with Weak Channel Gain}

\begin{figure}[tb]
  \centering
  \subfigure[Training loss]{\includegraphics[width=0.40\textwidth]{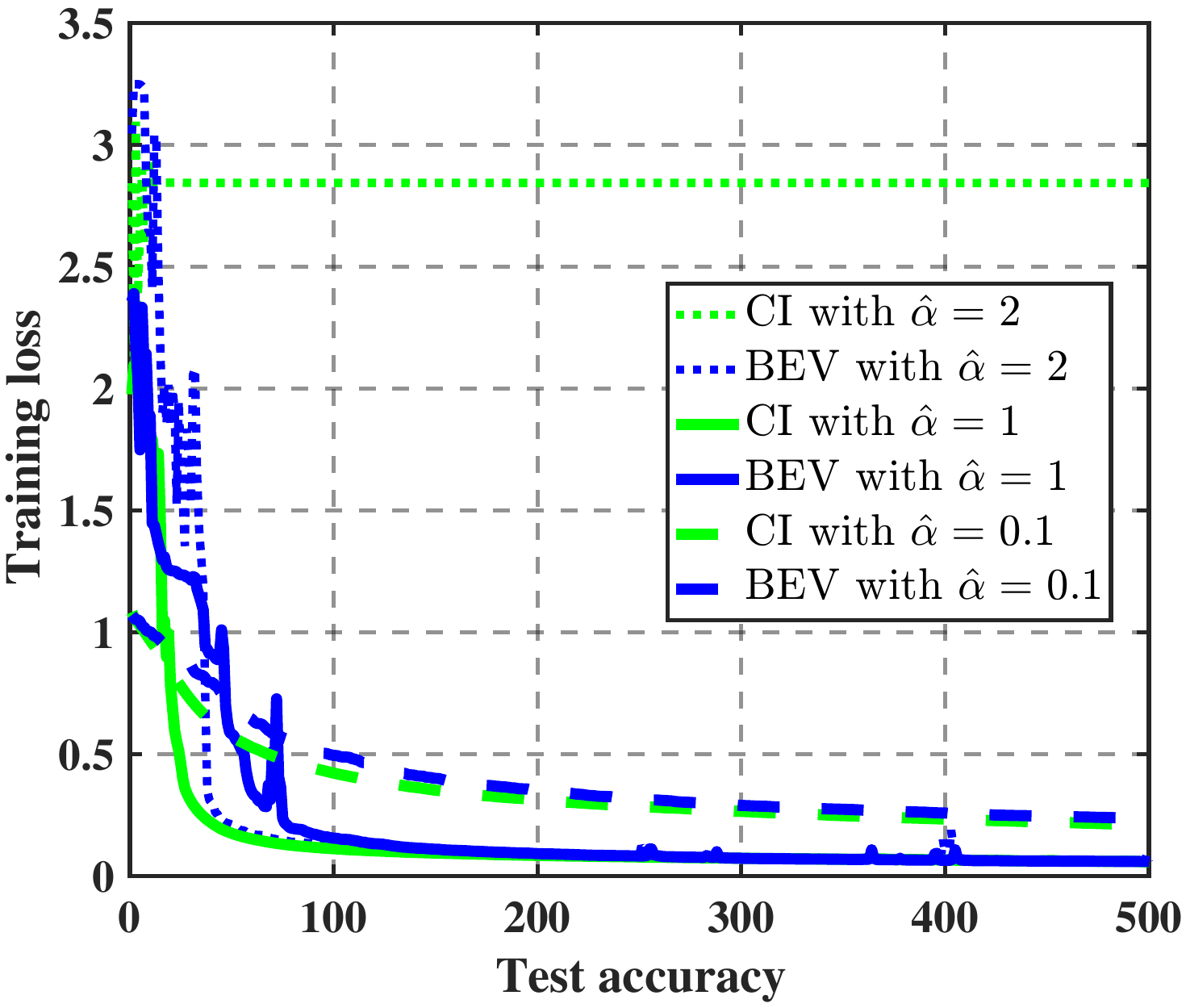}
  \label{fig:Lossattminh} }
  \subfigure[Test accuracy]{\includegraphics[width=0.40\textwidth]{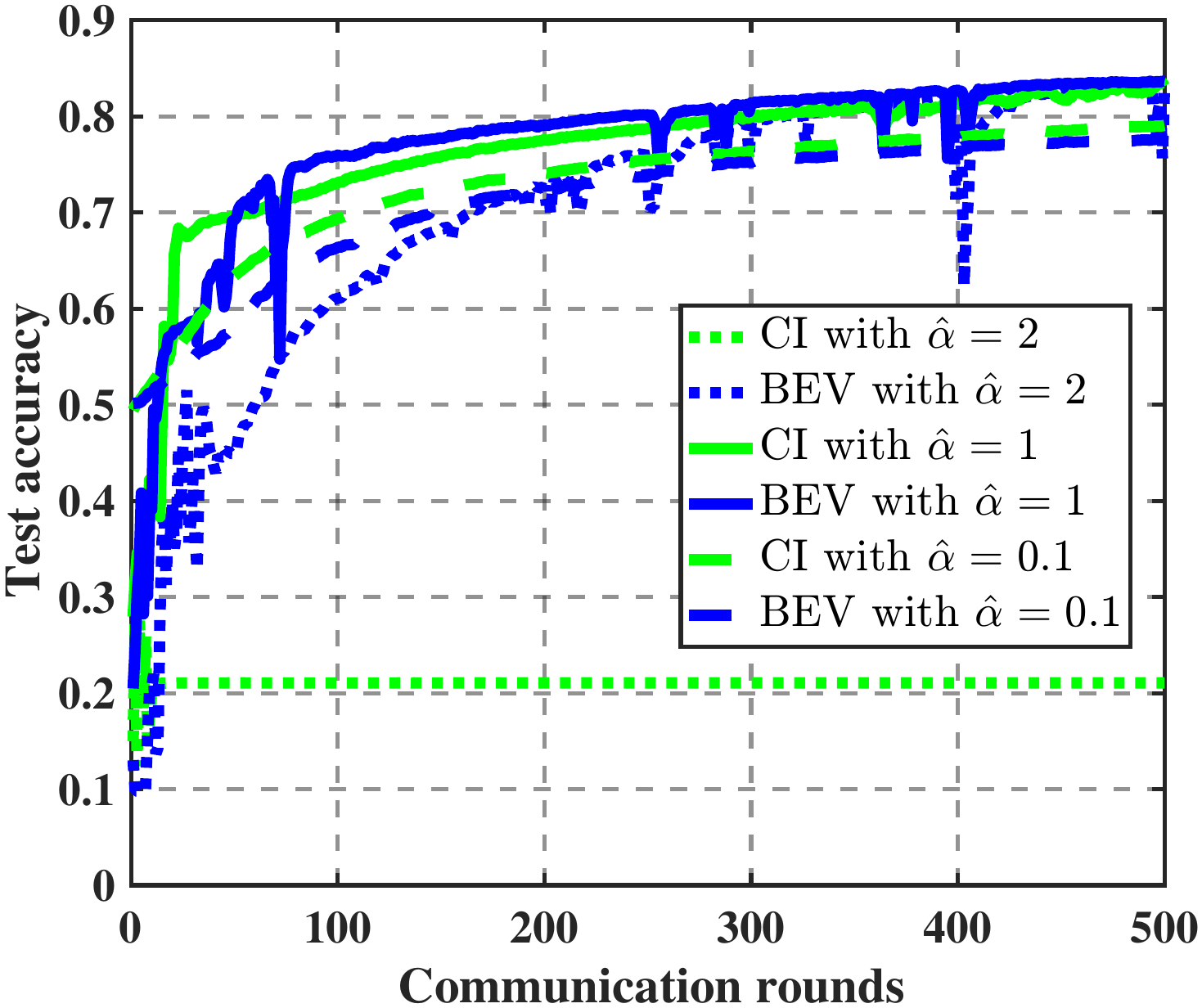}\label{fig:Accuracyattminh}}
  \caption{The performance of BEV and CI with a Byzantine attacker whose channel gain is the lowest.}\label{fig:withattminh}
\end{figure}

In Fig \ref{fig:withattminh}, we compare the performance of BEV with CI under a single Byzantine attack. Suppose that the attacker has the lowest channel gain among all local workers. It still adopts the strongest attack strategy to destroy FLOA.
Since the Byzantine attack to FLOA is relatively weak, both BEV and CI can converge, if a proper learning rate $\hat{\alpha}=\frac{\bar{\alpha}}{L\sqrt{T}}=\frac{\Omega}{\omega}\alpha$ is selected. On the other hand, when the learning rate is not properly chosen, e.g., when $\hat{\alpha}= 2$ in Fig. \ref{fig:withattminh}, BEV can converge but CI fails. When $\hat{\alpha}=1$, both BEV and CI can converge, but the convergence rate of BEV is faster than that of CI. This is because for a large learning rate, the asymptotic convergence rate is dominated by $O(\frac{1}{\Omega\sqrt{T}})$ and $\Omega_{BEV}> \Omega_{CI}$. When $\hat{\alpha}=0.1$, the performance of BEV is a little bit weaker in performance than CI. In practice, when the convergence can be guaranteed, we prefer a large learning rate to achieve a fast convergence rate. Under a large learning rate, e.g., $\hat{\alpha}=1$, our BEV works better than CI. 

\begin{figure}[tb]
  \centering
  \subfigure[Training loss]{\includegraphics[width=0.40\textwidth]{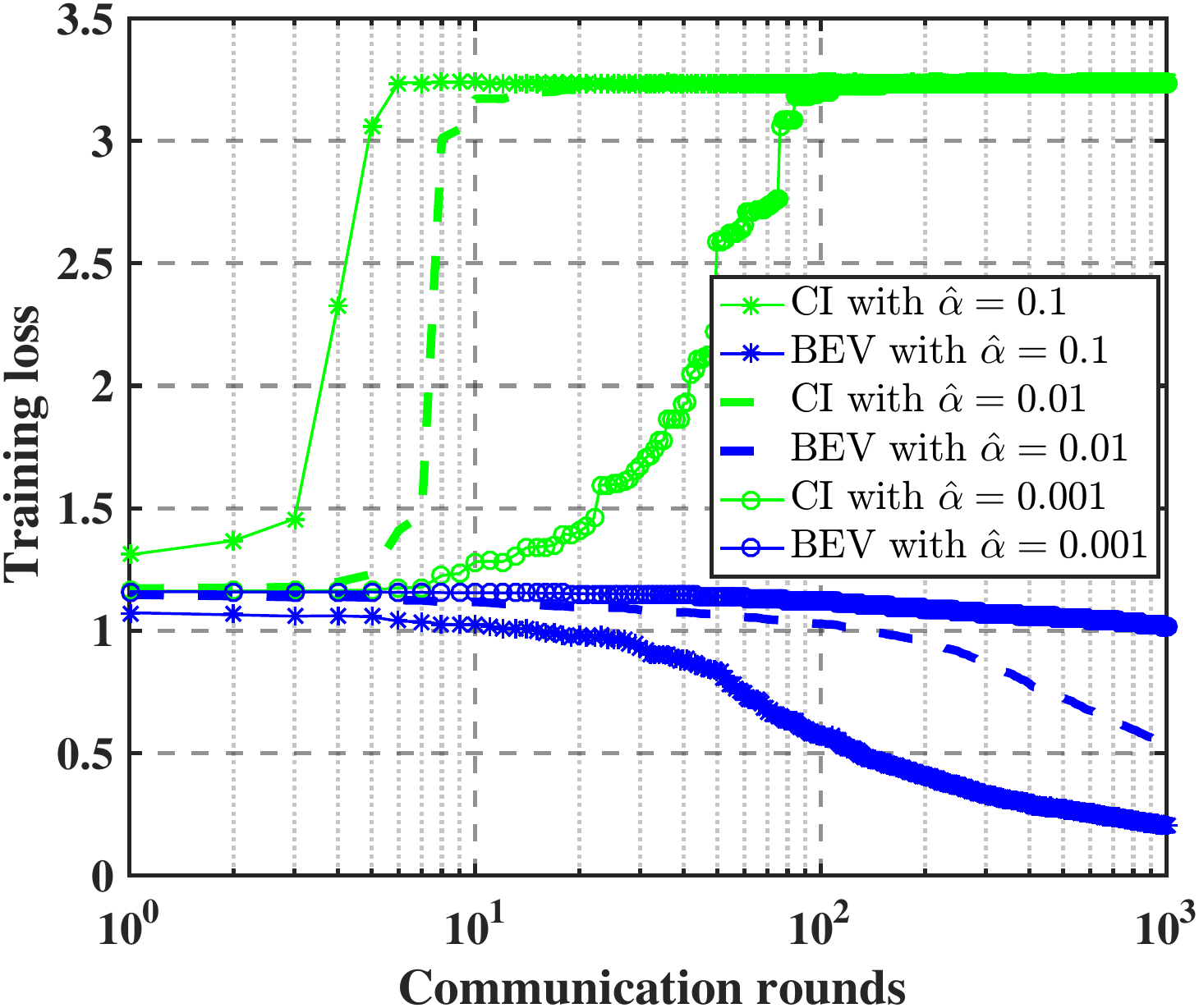}
  \label{fig:Lossattmaxh} }
  \subfigure[Test accuracy]{\includegraphics[width=0.40\textwidth]{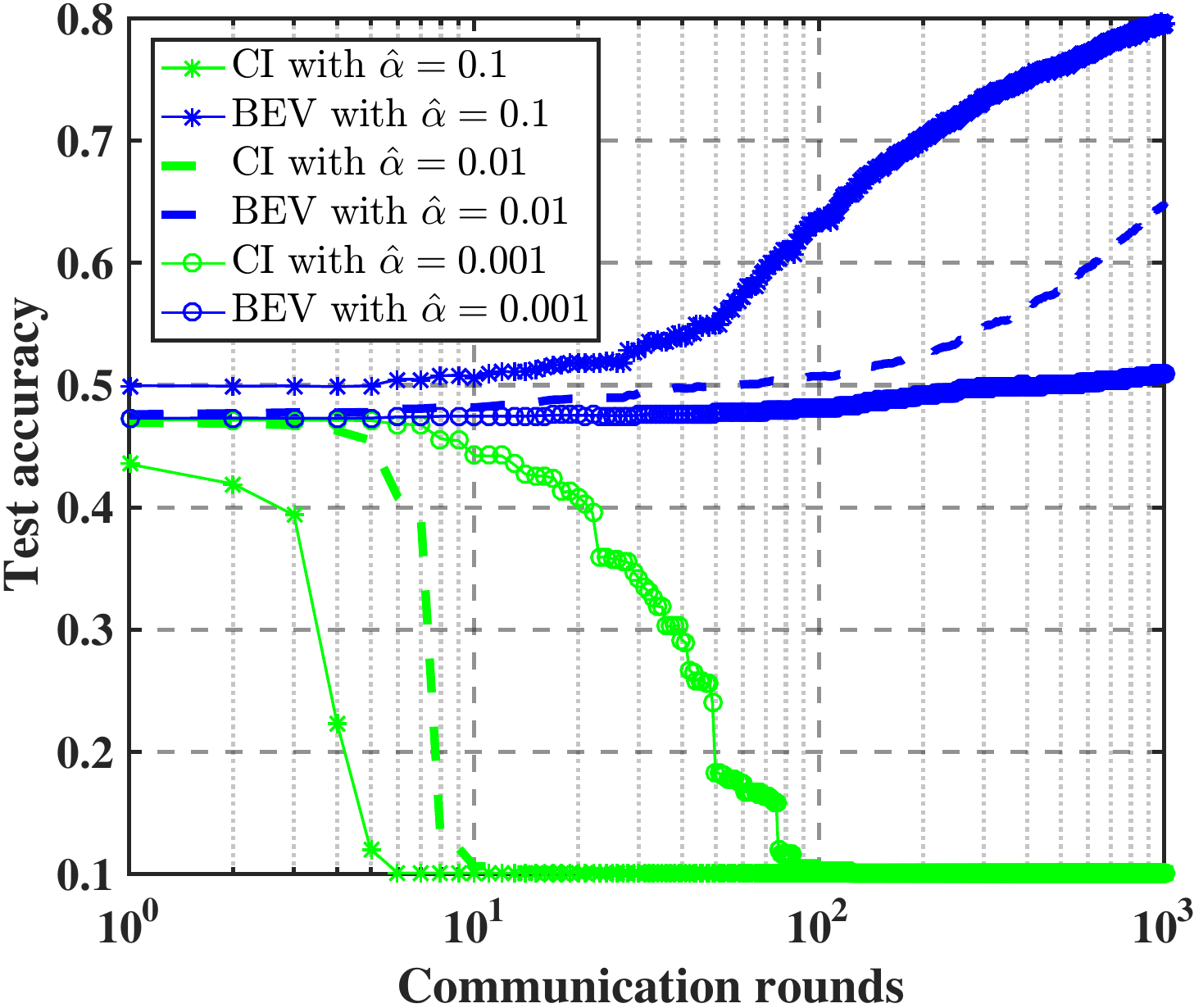}\label{fig:Accuracyattmaxh}}
  \caption{The performance of BEV and CI with a Byzantine attacker whose channel gain is the highest.}\label{fig:withattmaxh}
\end{figure}

\subsection{Performance under a Single Attacker with Large Channel Gain}
In Fig \ref{fig:withattmaxh}, we compare the performance of BEV with CI under a Byzantine attacker whose channel gain is the highest among all local workers. Thus, this is a strong attack. In this case of strong attacks, we compare the performance of BEV with CI under $\hat{\alpha}=\frac{\bar{\alpha}}{L\sqrt{T}}=\frac{\Omega}{\omega}\alpha$. Since the convergence condition $\omega_{CI}>0$ is hard to guarantee, it can be seen from Fig \ref{fig:withattmaxh} that CI cannot converge or coverage to a failure situation. As $\hat{\alpha}$ decreases, it is useful for CI to converge to the right direction, but it still cannot defend %defense
the attack after a few iterations. On the other hand, BEV can still converge, and hence is a better choice than CI in the presence of a strong attack. In addition, the convergence rate decreases as $\hat{\alpha}$ decreases. This implies that a larger learning rate is preferred under the condition of guaranteed convergence.

\begin{figure}[tb]
  \centering
  \subfigure[Training loss]{\includegraphics[width=0.40\textwidth]{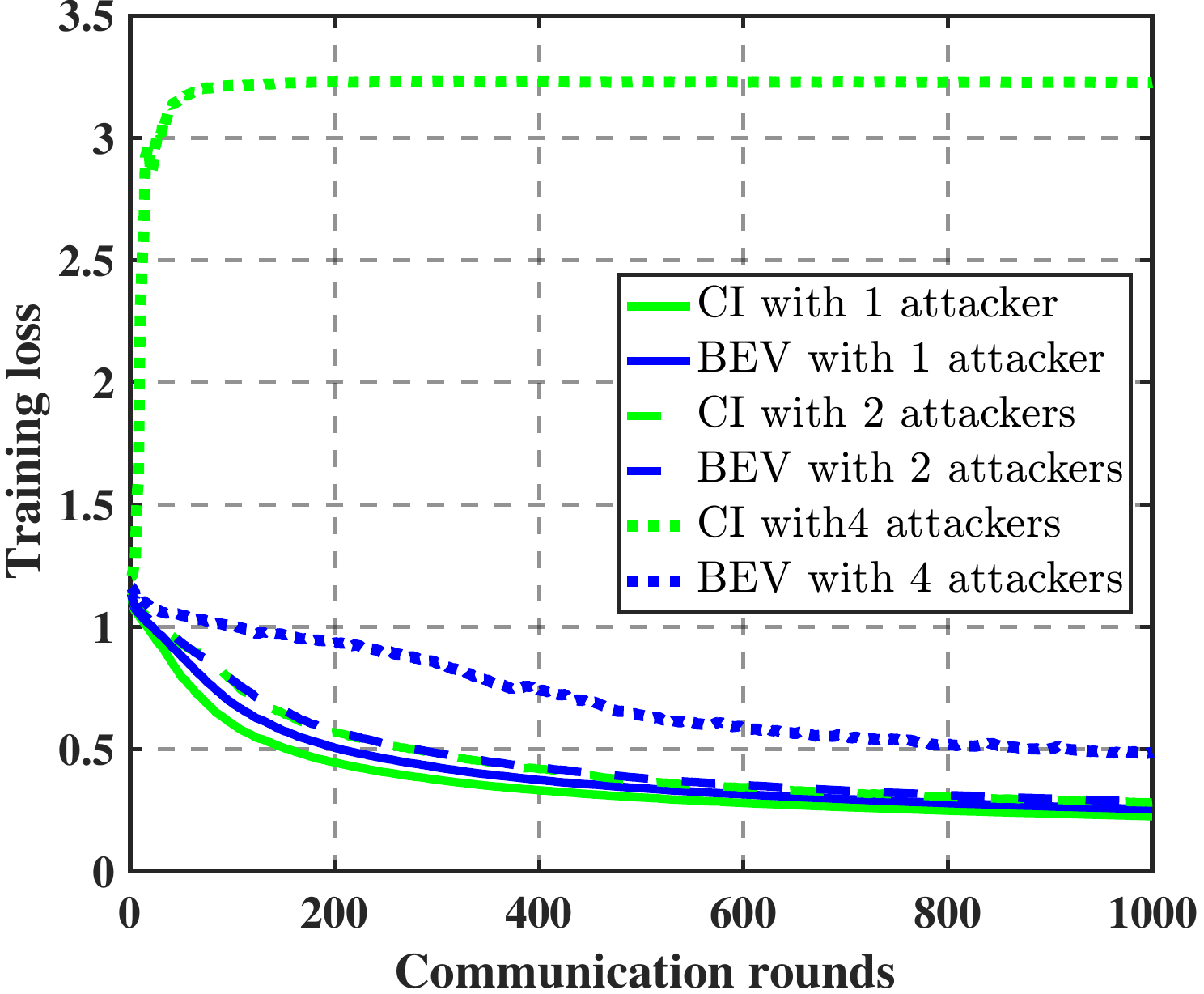}
  \label{fig:Lossattnum} }
  \subfigure[Test accuracy]{\includegraphics[width=0.40\textwidth]{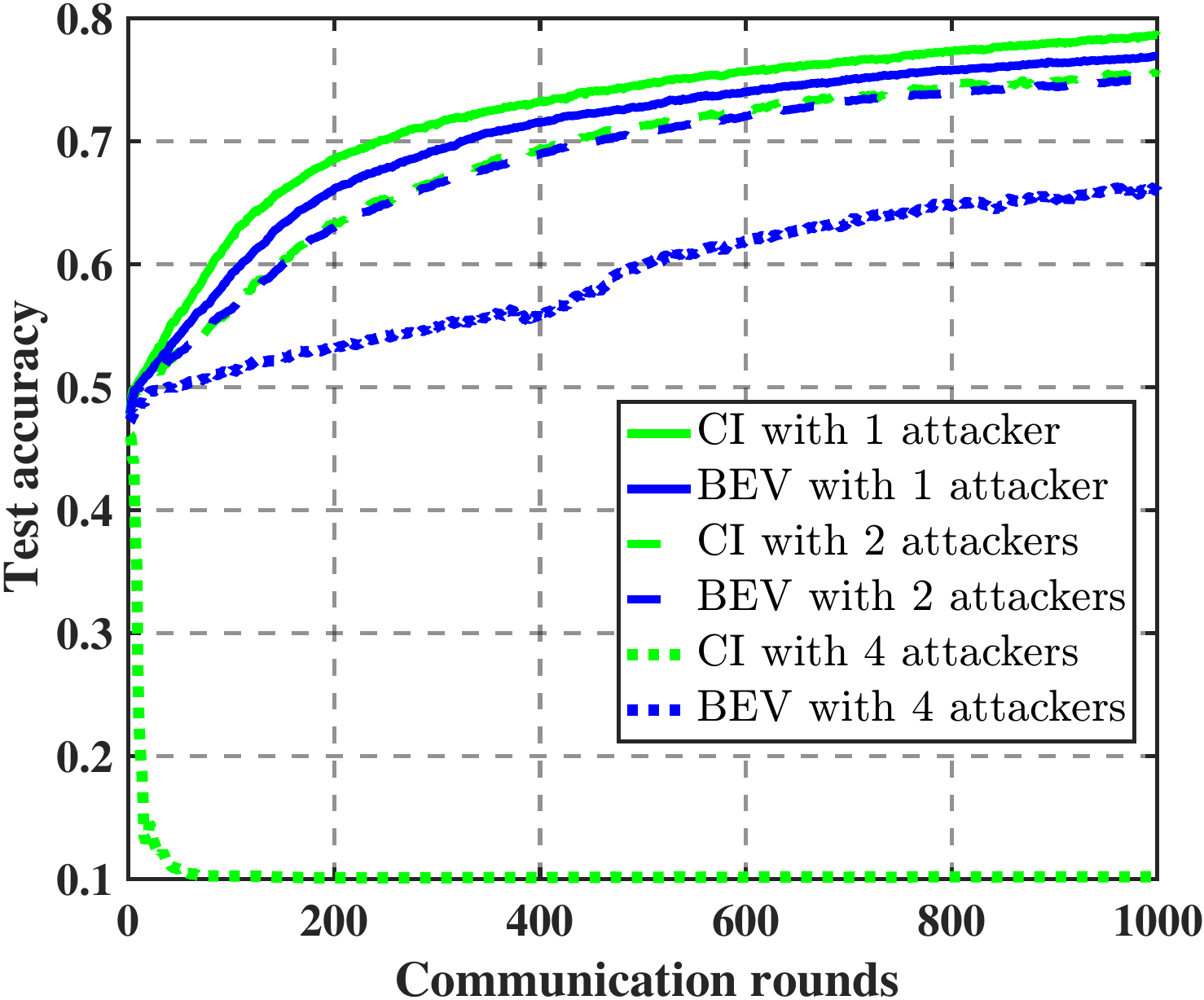}\label{fig:Accuracyattnum}}
  \caption{The performance of BEV and CI with the different number of Byzantine attackers.}\label{fig:withattnum}
\end{figure}

\subsection{Performance with Multiple Randomly Selected Attackers}
In Fig \ref{fig:withattnum}, %{withattmaxh},
we compare the performance of BEV with CI under the different number of Byzantine attackers. As we can see, when the number of Byzantine attackers is less than 4, both BEV and CI can converge, but the convergence rate decreases as the number of Byzantine attackers increases. When the number of Byzantine attackers is 4, i.e., $N>\frac{U}{1+\sqrt{\pi U}}$, CI can not converge to the correct direction, while BEV still converges in the correct direction but it converges at a slower rate. These results are %is
consistent with our discussions in %the
Remark \ref{remark:SP} and Remark \ref{remark:BEVcondition}.

\section{Conclusion}
This paper studies the robustness of FL over the air (FLOA) against Byzantine attacks. We provide theoretical analysis on convergence performance of different transmission schemes. Our analytical results reveal the strongest attack that Byzantine attackers can impose to deter FLOA from converging to the correct direction. Our convergence analyses, corroborated by simulation results, delineate the convergence behavior of the CI and BEV schemes under various adversarial environments. Specifically, in the absence of any Byzantine attacker, CI has the performance comparable to the ideal error-free case, while BEV has 2\% performance loss. In the weakest Byzantine attack, for a large learning rate, both CI and BEV can converge while BEV converges faster than CI. If there exists a strong Byzantine attacker, the convergence of CI cannot be guaranteed, but BEV can still converge. In practice, since it is impossible to determine the intensity of potential attacks, BEV is a better option to counter Byzantine attacks, because it performs well under various attack situations.

\section*{Acknowledgments}
This work was partly supported by the National Natural Science Foundation of China (Grants \#61871023 and \#61931001), Beijing Natural Science Foundation (Grant \#4202054), the National Science Foundation of the US (Grants \#1741338,
\#1939553, \#2003211,\#2128596 and \#2136202), and the Virginia Research Investment Fund (Commonwealth Cyber Initiative Grant \#223996).

\begin{appendices}
\section{Proof of \textbf{Theorem \ref{Theorem:Worst-case}}}\label{Appendix B}%We prove that there exists the worst-case attack from the point of view of monotonically decreasing loss function.
 Given the estimates of the global gradient in \eqref{eq:de-normalization}, we have the update rule for model parameters as follows
\begin{align}%&\text{\textbf{Model updating under CI:}}\nonumber\\
 \mathbf{w}_{t}=&\mathbf{w}_{t-1}-\alpha \tilde{\mathbf{g}}_{t}\nonumber\\
=&\mathbf{w}_{t-1}-\alpha\left(\sum_{m=1}^M p_{m,t}|h_{m,t}|\mathbf{g}_{m,t}+\epsilon_{t}\sum_{n=1}^N \hat{p}_{n,t}|h_{n,t}|\hat{\mathbf{g}}_{n,t}\right.\nonumber\\
  &\left.+\sum_{n=1}^Np_{n,t}|h_{n,t}|\bar{g}_{t}\mathbf{1}+\epsilon_{t}\mathbf{z}_{t}\right) . \label{eq:modelupdateunderWS}
\end{align}

Substituting \eqref{eq:modelupdateunderWS} to \eqref{Taylor}, we have
\begin{align}
F(\mathbf{w}_{t})
\leq& F(\mathbf{w}_{t-1})+\mathbf{g}^T_t(\mathbf{w}_{t}-\mathbf{w}_{t-1}) +\frac{L}{2}\|\mathbf{w}_{t}-\mathbf{w}_{t-1}\|^2
\nonumber \\
=& F(\mathbf{w}_{t-1})-\alpha\mathbf{g}^T_t\left(\sum_{m=1}^M p_{m,t}|h_{m,t}|\mathbf{g}_{m,t}\right.\nonumber \\&\left.+\epsilon_{t}\sum_{n=1}^N \hat{p}_{n,t}|h_{n,t}|\hat{\mathbf{g}}_{n,t} +\sum_{n=1}^Np_{n,t}|h_{n,t}|\bar{g}_{t}\mathbf{1}+\epsilon_{t}\mathbf{z}_{t}\right) \nonumber \\&+\frac{\alpha^2L}{2}\left\|\sum_{m=1}^M p_{m,t}|h_{m,t}|\mathbf{g}_{m,t}+\epsilon_{t}\sum_{n=1}^N \hat{p}_{n,t}|h_{n,t}|\hat{\mathbf{g}}_{n,t} \right.\nonumber \\&\left.+\sum_{n=1}^Np_{n,t}|h_{n,t}|\bar{g}_{t}\mathbf{1}+\epsilon_{t}\mathbf{z}_{t}\right\|^2.
\label{Taylor1WS}
\end{align}

Rewriting this inequality and taking the expectation, we have
\begin{align}\label{TaylorexpectationWS}
\mathbb{E}[F&(\mathbf{w}_{t})-F(\mathbf{w}_{t-1})]
\leq -\alpha\mathbf{g}^T_t\left(\sum_{m=1}^M p_{m,t}|h_{m,t}|\mathbf{g}_{t}\right.\nonumber \\&\left.+\sum_{n=1}^N \hat{p}_{n,t}|h_{n,t}|\hat{\mathbf{g}}_{n,t}+\sum_{n=1}^Np_{n,t}|h_{n,t}|\mathbb{E}[\bar{g}_t]\mathbf{1}\right) \nonumber \\&+\frac{\alpha^2L}{2}\mathbb{E}\left[\left\|\sum_{m=1}^M p_{m,t}|h_{m,t}|\mathbf{g}_{m,t}+\epsilon_{t}\sum_{n=1}^N \hat{p}_{n,t}|h_{n,t}|\hat{\mathbf{g}}_{n,t} \right.\right.\nonumber \\&\left.\left.+\sum_{n=1}^Np_{n,t}|h_{n,t}|\bar{g}_{t}\mathbf{1}+\epsilon_{t}\mathbf{z}_{t}\right\|^2\right].
\end{align}

%Using Jensen’s inequality $(\sum_{i=1}^{n}a_i)^2\leq n\sum_{i=1}^{n}a_i^2$, it has
%\begin{align}
%\mathbf{g}^T_t\mathbb{E}[\bar{g}_t]\mathbf{1}& =\mathbf{g}^T_t\frac{\sum_{d=1}^{D}g_t^{d}}{D}\mathbf{1}\nonumber \\&=\frac{(\sum_{d=1}^{D}g_t^{d})^2}{D}\leq \sum_{d=1}^{D}(g_t^{d})^2=\|\mathbf{g}_t\|^2.\label{Taylor_gtjensenWS}
%\end{align}
Since $\mathbf{g}^T_t\mathbb{E}[\bar{g}_t]\mathbf{1}=\mathbf{g}^T_t\frac{\sum_{d=1}^{D}g_t^{d}}{D}\mathbf{1}=\frac{(\sum_{d=1}^{D}g_t^{d})^2}{D}\geq 0$, we have
\begin{align}\label{TaylorexpectationWS1}
\mathbb{E}[F(&\mathbf{w}_{t})-F(\mathbf{w}_{t-1})]
\leq \nonumber \\&-\alpha\left(\sum_{i=1}^M p_{i,t}|h_{i,t}|\|\mathbf{g}_{t}\|^2 +\sum_{n=1}^N \hat{p}_{n,t}|h_{n,t}|\mathbf{g}_{t}^T\hat{\mathbf{g}}_{n,t}\right)\nonumber \\&+\frac{\alpha^2L}{2}\mathbb{E}\left[\left\|\sum_{m=1}^M p_{m,t}|h_{m,t}|\mathbf{g}_{m,t}+\epsilon_{t}\sum_{n=1}^N \hat{p}_{n,t}|h_{n,t}|\hat{\mathbf{g}}_{n,t} \right.\right.\nonumber \\&\left.\left.+\sum_{n=1}^Np_{n,t}|h_{n,t}|\bar{g}_{t}\mathbf{1}+\epsilon_{t}\mathbf{z}_{t}\right\|^2\right].
\end{align}

If $\mathbb{E}(F(\mathbf{w}_{t})-F(\mathbf{w}_{t-1}))\leq 0$, the objective decreases monotonically, then FL converges in mean.
As we can see from \eqref{TaylorexpectationWS1}, if we set the learning rate to be small enough, then the second term on the right hand side of \eqref{TaylorexpectationWS1} diminishes, and convergence is ensured as long as
%As we can see, if $\mathbb{E}(F(\mathbf{w}_{t})-F(\mathbf{w}_{t-1}))\leq 0$, the objective decreases monotonically, i.e., FL would converge. We can set a small enough learning rate to ensure, as long as we have
\begin{align}\label{conditionWS}
\sum_{i=1}^M p_{i,t}|h_{i,t}|\|\mathbf{g}_{t}\|^2 +\sum_{n=1}^N \hat{p}_{n,t}|h_{n,t}|\mathbf{g}_{t}^T\hat{\mathbf{g}}_{n,t}>0.
\end{align}

In order to break the convergence condition in \eqref{conditionWS}, the $N$ Byzantine attackers would seek to make  $\mathbf{g}_{t}^T\hat{\mathbf{g}}_{n,t}<0$ for any $n$. In fact, the best way for them is to send $\hat{\mathbf{g}}_{n,t}=-\mathbf{g}_{n,t}$ with their maximum power so as to make $\mathbb{E}[\mathbf{g}_{t}^T\hat{\mathbf{g}}_{n,t}]=-\|\mathbf{g}_{t}\|^2<0$.%prevent convergence, the Byzantine attackers are supposed to spend their efforts to make $\mathbf{g}_{t}^T\hat{\mathbf{g}}_{n,t}<0$. Thus, the best way for them is to send $\hat{\mathbf{g}}_{n,t}=-\mathbf{g}_{n,t}$ with their maximum power so as to make $\mathbb{E}[\mathbf{g}_{t}^T\hat{\mathbf{g}}_{n,t}]=-\|\mathbf{g}_{t}\|^2<0$.

Given the power constraint in \eqref{eq:powerConstraintByzantine}, we have
\begin{align}\label{eq:pc_Ws}
\mathbb{E}[\|\hat{p}_{n,t}\hat{\mathbf{g}}_{n,t}\|^2]&=\hat{p}_{n,t}^2\sum_{d=1}^D \mathbb{E}[(g_{n,t}^d)^2]\nonumber \\&=\hat{p}_{n,t}^2D(\epsilon_{t}^2+\bar{g}_{t}^2)\leq p_n^{\max}.
\end{align}

As a result, the Byzantine attackers are supposed to send $\hat{\mathbf{g}}_{n,t}=-\mathbf{g}_{n,t}$ with their maximum power $\hat{p}_{n,t}=\sqrt{\frac{p_n^{\max}}{D(\epsilon_{t}^2+\bar{g}_{t}^2)}}$.

\section{Proof of \textbf{Theorem \ref{Theorem:ConvergenceCI}}}\label{Appendix C}
Given the estimates of the global gradient in \eqref{eq:de-normalization}, the power allocation policy in \eqref{eq:powerchannel-inversion}, and the strongest % worst-case
attacks in \textbf{Theorem~\ref{Theorem:Worst-case}}, we have the update rule for model parameters as follows
\begin{align}%&\text{\textbf{Model updating under CI:}}\nonumber\\
 \mathbf{w}_{t}=&\mathbf{w}_{t-1}-\alpha \tilde{\mathbf{g}}_{t}\nonumber\\
=&\mathbf{w}_{t-1}-\alpha\left(\sum_{m=1}^M p_{m,t}|h_{m,t}|\mathbf{g}_{m,t}+\epsilon_{t}\sum_{n=1}^N \hat{p}_{n,t}|h_{n,t}|\hat{\mathbf{g}}_{n,t}\right.\nonumber \\&\left.+\sum_{n=1}^Np_{n,t}|h_{n,t}|\bar{g}_{t}\mathbf{1}+\epsilon_{t}\mathbf{z}_{t}\right) \nonumber\\
=&\mathbf{w}_{t-1}-\alpha\left(\sum_{m=1}^M b_0\mathbf{g}_{m,t}\right.\nonumber \\&\left.-\epsilon_{t}\sum_{n=1}^N \sqrt{\frac{p_n^{\max}}{D(\epsilon_{t}^2+\bar{g}_{t}^2)}}|h_{n,t}|\mathbf{g}_{n,t}+\sum_{n=1}^Nb_0\bar{g}_{t}\mathbf{1}+\epsilon_{t}\mathbf{z}_{t}\right). \label{eq:modelupdateunderCI}
\end{align}

Substituting \eqref{eq:modelupdateunderCI} to \eqref{Taylor}, we get
\begin{align}\label{Taylor1}
F(\mathbf{w}_{t})
&\leq F(\mathbf{w}_{t-1})+\mathbf{g}^T_t(\mathbf{w}_{t}-\mathbf{w}_{t-1})+\frac{L}{2}\|\mathbf{w}_{t}-\mathbf{w}_{t-1}\|^2
\nonumber \\
&= F(\mathbf{w}_{t-1})-\alpha\mathbf{g}^T_t\left(\sum_{m=1}^M b_0\mathbf{g}_{m,t}\right.\nonumber \\&\left.-\epsilon_{t}\sum_{n=1}^N \sqrt{\frac{p_n^{\max}}{D(\epsilon_{t}^2+\bar{g}_{t}^2)}}|h_{n,t}|\mathbf{g}_{n,t}+\sum_{n=1}^Nb_0\bar{g}_{t}\mathbf{1}+\epsilon_{t}\mathbf{z}_{t}\right) \nonumber \\&+\frac{\alpha^2L}{2}\left\|\sum_{m=1}^M b_0\mathbf{g}_{m,t}-\epsilon_{t}\sum_{n=1}^N \sqrt{\frac{p_n^{\max}}{D(\epsilon_{t}^2+\bar{g}_{t}^2)}}|h_{n,t}|\mathbf{g}_{n,t}\right.\nonumber \\&\left.+\sum_{n=1}^Nb_0\bar{g}_{t}\mathbf{1}+\epsilon_{t}\mathbf{z}_{t}\right\|^2.
\end{align}

Rewriting this inequality and taking the expectation, we get
\begin{align}\label{Taylorexpectation}
\mathbb{E}[F(\mathbf{w}_{t})&-F(\mathbf{w}_{t-1})]
\leq -\alpha\mathbf{g}^T_t\left(\sum_{m=1}^M b_0\mathbf{g}_{t}\right.\nonumber \\&\left.-\epsilon_{t}\sum_{n=1}^N \sqrt{\frac{p_n^{\max}}{D(\epsilon_{t}^2 +\bar{g}_{t}^2)}}\mathbb{E}[|h_{n,t}|]\mathbf{g}_{t}+\sum_{n=1}^N b_0\mathbb{E}[\bar{g}_t]\mathbf{1}\right) \nonumber \\&+\frac{\alpha^2L}{2}\mathbb{E}\left[\left\|\sum_{m=1}^M b_0\mathbf{g}_{m,t}+\sum_{n=1}^Nb_0\bar{g}_{t}\mathbf{1}\right.\right.\nonumber \\&\left.\left.-\epsilon_{t}\sum_{n=1}^N \sqrt{\frac{p_n^{\max}}{D(\epsilon_{t}^2+\bar{g}_{t}^2)}}|h_{n,t}|\mathbf{g}_{n,t} +\epsilon_{t}\mathbf{z}_{t}\right\|^2\right],
\end{align}
where $\mathbb{E}[|h_{i,t}|]=\sigma_i\sqrt{\frac{\pi}{2}}$, because of the Rayleigh distributed $|h_{i,t}|$.

%Using Jensen’s inequality, we have
%\begin{align}
%\mathbf{g}^T_t\mathbb{E}[\bar{g}_t]\mathbf{1}&=\mathbf{g}^T_t\frac{\sum_{d=1}^{D}g_t^{d}}{D}\mathbf{1}\nonumber \\&=\frac{(\sum_{d=1}^{D}g_t^{d})^2}{D}\leq \sum_{d=1}^{D}(g_t^{d})^2=\|\mathbf{g}_t\|^2.\label{Taylor_gtjensen}
%\end{align}
Since $\mathbf{g}^T_t\mathbb{E}[\bar{g}_t]\mathbf{1}=\mathbf{g}^T_t\frac{\sum_{d=1}^{D}g_t^{d}}{D}\mathbf{1}=\frac{(\sum_{d=1}^{D}g_t^{d})^2}{D}\geq 0$, we have
\begin{align}
\mathbb{E}&[F(\mathbf{w}_{t})-F(\mathbf{w}_{t-1})]
\leq -\alpha\left(Mb_0\right.\nonumber \\&\left.-\epsilon_{t}\sum_{n=1}^N \sqrt{\frac{p_n^{\max}}{D(\epsilon_{t}^2 +\bar{g}_{t}^2)}}\sigma_n\sqrt{\frac{\pi}{2}}\right)\|\mathbf{g}_t\|^2 \nonumber \\&+\frac{\alpha^2L}{2}\mathbb{E}\left[\left\|\sum_{m=1}^M b_0\mathbf{g}_{m,t}-\epsilon_{t}\sum_{n=1}^N \sqrt{\frac{p_n^{\max}}{D(\epsilon_{t}^2+\bar{g}_{t}^2)}}|h_{n,t}|\mathbf{g}_{n,t} \right.\right.\nonumber \\&\left.\left.+\sum_{n=1}^Nb_0\bar{g}_{t}\mathbf{1}+\epsilon_{t}\mathbf{z}_{t}\right\|^2\right]
\nonumber \\&\leq -\alpha\left(Mb_0-\sum_{n=1}^N \sqrt{\frac{p_n^{\max}}{D}}\sigma_n\sqrt{\frac{\pi}{2}}\right)\|\mathbf{g}_t\|^2 \nonumber \\&+\frac{\alpha^2L}{2}\mathbb{E}\left[\left\|\sum_{m=1}^M b_0\mathbf{g}_{m,t}-\epsilon_{t}\sum_{n=1}^N \sqrt{\frac{p_n^{\max}}{D(\epsilon_{t}^2+\bar{g}_{t}^2)}}|h_{n,t}|\mathbf{g}_{n,t} \right.\right.\nonumber \\&\left.\left.+\sum_{n=1}^Nb_0\bar{g}_{t}\mathbf{1}+\epsilon_{t}\mathbf{z}_{t}\right\|^2\right].
\label{Taylorexpectation2}
\end{align}

%The above condition can be considered as the condition of

Using the triangle inequality of norms and Jensen’s inequality, we have
\begin{align}\label{Jensenexpectation1}
&\mathbb{E}\left[\left\|\sum_{m=1}^M b_0\mathbf{g}_{m,t}-\epsilon_{t}\sum_{n=1}^N \sqrt{\frac{p_n^{\max}}{D(\epsilon_{t}^2+\bar{g}_{t}^2)}}|h_{n,t}|\mathbf{g}_{n,t} \right.\right.\nonumber \\&\left.\left.\qquad+\sum_{n=1}^Nb_0\bar{g}_{t}\mathbf{1} +\epsilon_{t}\mathbf{z}_{t}\right\|^2\right]
\nonumber \\&=\mathbb{E}\left[\left\|\sum_{m=1}^M b_0\mathbf{g}_{m,t}-\epsilon_{t}\sum_{n=1}^N \sqrt{\frac{p_n^{\max}}{D(\epsilon_{t}^2+\bar{g}_{t}^2)}}|h_{n,t}|\mathbf{g}_{n,t} \right.\right.\nonumber \\&\left.\left.\qquad+\sum_{n=1}^Nb_0\bar{g}_{t}\mathbf{1}\right\|^2\right] +\mathbb{E}[\|\epsilon_{t}\mathbf{z}_{t}\|^2]
\nonumber \\
&\leq \mathbb{E}\left[\left(\sum_{m=1}^M \|b_0\mathbf{g}_{m,t}\|+\sum_{n=1}^N \left\|\epsilon_{t}\sqrt{\frac{p_n^{\max}}{D(\epsilon_{t}^2 +\bar{g}_{t}^2)}}|h_{n,t}|\mathbf{g}_{n,t}\right\| \right.\right.\nonumber \\&\left.\left.\qquad+\sum_{n=1}^N\|b_0\bar{g}_{t}\mathbf{1}\|\right)^2\right]+\epsilon^2 z^2
\nonumber \\&\leq \mathbb{E}\left[(U+N)\left(\sum_{m=1}^Mb_0^2\|\mathbf{g}_{m,t}\|^2 +\sum_{n=1}^Nb_0^2\|\bar{g}_{t}\mathbf{1}\|^2\right.\right.\nonumber \\&\left.\left.\qquad
+\sum_{n=1}^N \frac{\epsilon_{t}^2p_n^{\max}}{D(\epsilon_{t}^2 +\bar{g}_{t}^2)}|h_{n,t}|^2\|\mathbf{g}_{n,t}\|^2\right)\right] +\epsilon^2 z^2
\nonumber \\&= (U+N)\left(\sum_{m=1}^Mb_0^2\mathbb{E}[\|\mathbf{g}_{m,t}\|^2] +\sum_{n=1}^Nb_0^2D\left(\frac{\sum_{d=1}^{D}g_t^{d}}{D} \right)^2\right.\nonumber \\&\qquad\left. +\sum_{n=1}^N \frac{\epsilon_{t}^2p_n^{\max}}{D(\epsilon_{t}^2 +\bar{g}_{t}^2)}\mathbb{E}[|h_{n,t}|^2]\mathbb{E}[\|\mathbf{g}_{n,t}\|^2] \right)+\epsilon^2 z^2
\nonumber \\&\leq(U+N)\left(\sum_{m=1}^Mb_0^2(\|\mathbf{g}_{t}\|^2+\delta^2)+\sum_{n=1}^Nb_0^2\|\mathbf{g}_{t}\|^2\right.\nonumber \\&\left.\qquad+\sum_{n=1}^N \frac{\epsilon_{t}^2p_n^{\max}}{D(\epsilon_{t}^2 +\bar{g}_{t}^2)}2\sigma_n^2(\|\mathbf{g}_{t}\|^2+\delta^2) \right)+\epsilon^2 z^2
\nonumber \\&\leq(U+N)\left(\left(Ub_0^2+\sum_{n=1}^N \frac{2\sigma_n^2 p_n^{\max}}{D}\right)\|\mathbf{g}_{t}\|^2\right.\nonumber \\&\left.\qquad+\left(Mb_0^2+\sum_{n=1}^N \frac{2\sigma_n^2 p_n^{\max}}{D}\right)\delta^2\right)+\epsilon^2 z^2.
\end{align}

Substituting \eqref{Jensenexpectation1} to \eqref{Taylorexpectation2}, we get
\begin{align}\label{Taylorexpectation3}
&\mathbb{E}[F(\mathbf{w}_{t})-F(\mathbf{w}_{t-1})]
\leq -\alpha\left(Mb_0-\sum_{n=1}^N \sqrt{\frac{\pi \sigma_n^2p_n^{\max}}{2D}}\right)\|\mathbf{g}_t\|^2 \nonumber \\&+\frac{\alpha^2L}{2}\left ((U+N)\left(\left(Ub_0^2+\sum_{n=1}^N \frac{2\sigma_n^2 p_n^{\max}}{D}\right)\|\mathbf{g}_{t}\|^2
\right.\right.\nonumber \\&\left.\left.
+\left(Mb_0^2+\sum_{n=1}^N \frac{2\sigma_n^2 p_n^{\max}}{D}\right)\delta^2\right)+\epsilon^2 z^2\right)
\nonumber \\&\leq\left(\frac{\alpha^2L}{2}\Omega_{CI}-\alpha\omega_{CI}\right)\|\mathbf{g}_t\|^2
+\frac{\alpha^2L}{2}(\Omega_{CI}\delta^2+\epsilon^2 z^2),
\end{align}
where
\begin{align}\label{TayloromegaCIs}
\omega_{CI}&=Mb_0-\sum_{n=1}^N \sqrt{\frac{\pi \sigma_n^2p_n^{\max}}{2D}},\\
\Omega_{CI}&=(U+N)\left(Ub_0^2+\sum_{n=1}^N \frac{2\sigma_n^2 p_n^{\max}}{D}\right).\label{TayloromegaCIL}
\end{align}%and $\omega_{CI}>0$ is the convergence condition.}

If $\mathbb{E}(F(\mathbf{w}_{t})-F(\mathbf{w}_{t-1}))\leq 0$, the objective decreases monotonically, then FL converges in mean. Thus, to ensure the convergence, we have the following convergence condition%We can set a learning rate small enough to ensure the following convergence condition}
%As we can see, if $\mathbb{E}(F(\mathbf{w}_{t})-F(\mathbf{w}_{t-1}))\leq 0$, the objective decreases monotonically, i.e., FL would converge. We can set a small enough learning rate to ensure convergence, as long as we have
\begin{align}\label{condition}
\frac{\alpha^2L}{2}\Omega_{CI}-\alpha\omega_{CI}<0.
%Mb_0-\sum_{n=1}^N \sqrt{\frac{\pi \sigma_n^2p_n^{\max}}{2D}}>0.
\end{align}

Now extend the expectation over randomness in the trajectory, and perform a telescoping sum over the $T$ iterations:
\begin{align}\label{sumexpectation}
F(\mathbf{w}_{0})-F(\mathbf{w}^*)&\geq F(\mathbf{w}_{0})-\mathbb{E}[F(\mathbf{w}_T)]
\nonumber \\
&
=\mathbb{E}\left[\sum_{t=1}^{T}(F(\mathbf{w}_{t-1})-F(\mathbf{w}_{t}))\right]
\nonumber \\
&
\geq\mathbb{E}\left[ \sum_{t=1}^{T}\left(\left(\alpha\omega_{CI}-\frac{\alpha^2L}{2}\Omega_{CI}\right)\|\mathbf{g}_t\|^2 \right.\right.\nonumber \\&\left.\left.-\frac{\alpha^2L}{2}(\Omega_{CI}\delta^2+\epsilon^2 z^2)\right)\right].%
%\mathbb{E}(F(\mathbf{w}_{t})-F(\mathbf{w}_{t-1}))
%\leq& -\alpha\mathbf{g}^T_t(\sum_{i=1}^U \frac{\mathbb{E}(\mathbf{g}_{i,t})}{U}+\mathbb{E}(\mathbf{\tilde{z}}))+\frac{\alpha^2L}{2}\mathbb{E}(\|\sum_{i=1}^U \frac{\mathbf{g}_{i,t}}{U}+\mathbf{\tilde{z}}\|^2)
%\nonumber \\
%&=\alpha\|\mathbf{g}_t\|^2-\frac{\alpha^2L}{2}(\delta^2+\|\mathbf{g}_t\|^2+\frac{\sigma^2}{U^2b^2_t}).
\end{align}
We can rearrange this inequality to yield the rate:
\begin{align}\label{sumexpectation1}
\mathbb{E}&\left[\sum_{t=1}^{T}\left (\left(\alpha\omega_{CI} -\frac{\alpha^2L}{2}\Omega_{CI}\right)\|\mathbf{g}_t\|^2\right)\right]\nonumber \\&\leq F(\mathbf{w}_{0})-F(\mathbf{w}^*)+\frac{\alpha^2L}{2}T(\Omega_{CI}\delta^2+\epsilon^2 z^2).
\end{align}

If FL converges, the condition \eqref{condition} holds, yielding $\alpha\omega_{CI}-\frac{\alpha^2L}{2}\Omega_{CI}> 0$, and then we get
\begin{align}\label{sumexpectation2}
\mathbb{E}\left[\sum_{t=1}^{T}\frac{1}{T}\|\mathbf{g}_t\|^2\right]\leq&  \frac{1}{T(\alpha\omega_{CI}-\frac{\alpha^2L}{2}\Omega_{CI})}\left(F(\mathbf{w}_{0})-F(\mathbf{w}^*) \right.\nonumber \\&\left.+\frac{\alpha^2L}{2}T(\Omega_{CI}\delta^2+\epsilon^2 z^2)\right).
\end{align}

Let $\alpha=\frac{\omega_{CI}}{L\Omega_{CI}\sqrt{T}}\bar{\alpha}$, where $\bar{\alpha}< 2\sqrt{T}$ is a positive constant, and then we have
\begin{align}
&\mathbb{E}\left[\sum_{t=1}^{T}\frac{1}{T}\|\mathbf{g}_t\|^2\right]\leq  \frac{1}{T\left(\bar{\alpha}\frac{\omega_{CI}^2}{L\Omega_{CI}\sqrt{T}} -\frac{\bar{\alpha}^2\omega_{CI}^2}{2LT\Omega_{CI}}\right)}\left(F(\mathbf{w}_{0})\right.\nonumber \\&\left.\qquad-F(\mathbf{w}^*) +\frac{\bar{\alpha}^2\omega_{CI}^2}{2L\Omega_{CI}}\left(\delta^2+\frac{1}{\Omega_{CI}}\epsilon^2 z^2\right)\right)\nonumber\\
&=\frac{1}{T(\frac{\bar{\alpha}}{\sqrt{T}}-\frac{\bar{\alpha}^2}{2T})}\left(\frac{L\Omega_{CI}}{\omega_{CI}^2}(F(\mathbf{w}_{0})-F(\mathbf{w}^*)) \right.\nonumber \\&\left.\qquad+\frac{\bar{\alpha}^2}{2}\left(\delta^2+\frac{1}{\Omega_{CI}}\epsilon^2 z^2\right)\right)
\nonumber\\
&\leq\frac{1}{T\frac{\bar{\alpha}}{2\sqrt{T}}}\left(\frac{L\Omega_{CI}}{\omega_{CI}^2}(F(\mathbf{w}_{0})-F(\mathbf{w}^*)) \right.\nonumber \\&\left.\qquad+\frac{\bar{\alpha}^2}{2}\left(\delta^2+\frac{1}{\Omega_{CI}}\epsilon^2 z^2\right)\right)
\nonumber\\
&=\frac{1}{\sqrt{T}\bar{\alpha}}\left(\frac{2L\Omega_{CI}}{\omega_{CI}^2}(F(\mathbf{w}_{0})-F(\mathbf{w}^*)) \right.\nonumber \\&\left.\qquad+\bar{\alpha}^2\left(\delta^2+\frac{1}{\Omega_{CI}}\epsilon^2 z^2\right)\right).\label{sumexpectationCI3}
\end{align}

%Note that we should set $\bar{\alpha}< 2\sqrt{T}$ to satisfy $\frac{\bar{\alpha}}{\sqrt{T}}-\frac{\bar{\alpha}^2}{2T}>0$.

\section{Proof of \textbf{Theorem \ref{Theorem:ConvergenceBEV}}}\label{Appendix D}
Given the estimates of the global gradient in \eqref{eq:de-normalization}, the power allocation policy in \eqref{eq:powerBEV}, and the strongest %worst-case
attacks in \textbf{Theorem \ref{Theorem:Worst-case}}, we get the update rule for model parameters as follows
\begin{align}%&\text{\textbf{Model updating under CI:}}\nonumber\\
 &\mathbf{w}_{t}=\mathbf{w}_{t-1}-\alpha \tilde{\mathbf{g}}_{t}\nonumber\\
  &=\mathbf{w}_{t-1}-\alpha\left (\sum_{m=1}^M p_{m,t}|h_{m,t}|\mathbf{g}_{m,t}+\epsilon_{t}\sum_{n=1}^N \hat{p}_{n,t}|h_{n,t}|\hat{\mathbf{g}}_{n,t}\right.\nonumber \\&\left.+\sum_{n=1}^Np_{n,t}|h_{n,t}|\bar{g}_{t}\mathbf{1} +\epsilon_{t}\mathbf{z}_{t}\right) \nonumber\\
  &=\mathbf{w}_{t-1}-\alpha\left(\sum_{m=1}^M \sqrt{\frac{p_m^{\max}}{D}}|h_{m,t}|\mathbf{g}_{m,t}+\epsilon_{t}\mathbf{z}_{t}\right. \nonumber\\&\left.-\epsilon_{t}\sum_{n=1}^N \sqrt{\frac{p_n^{\max}}{D(\epsilon_{t}^2 +\bar{g}_{t}^2)}}|h_{n,t}|\mathbf{g}_{n,t}+\sum_{n=1}^N\sqrt{\frac{p_n^{\max}}{D}}|h_{n,t}|\bar{g}_{t}\mathbf{1} \right). \label{eq:modelupdateunderBEV}
\end{align}

Substituting \eqref{eq:modelupdateunderBEV} to \eqref{Taylor}, we get
\begin{align}\label{TaylorBEV1}
&F(\mathbf{w}_{t})
\leq F(\mathbf{w}_{t-1})+\mathbf{g}^T_t(\mathbf{w}_{t}-\mathbf{w}_{t-1}) +\frac{L}{2}\|\mathbf{w}_{t}-\mathbf{w}_{t-1}\|^2
\nonumber \\
&= F(\mathbf{w}_{t-1})-\alpha\mathbf{g}^T_t\left(\sum_{m=1}^M \sqrt{\frac{p_m^{\max}}{D}}|h_{m,t}|\mathbf{g}_{m,t}\right.\nonumber \\&\left.-\epsilon_{t}\sum_{n=1}^N \sqrt{\frac{p_n^{\max}}{D(\epsilon_{t}^2+\bar{g}_{t}^2)}}|h_{n,t}|\mathbf{g}_{n,t}+\epsilon_{t}\mathbf{z}_{t}\right. \nonumber \\&\left.+\sum_{n=1}^N\sqrt{\frac{p_n^{\max}}{D}}|h_{n,t}|\bar{g}_{t}\mathbf{1} \right) +\frac{\alpha^2L}{2}\left\|\sum_{m=1}^M \sqrt{\frac{p_m^{\max}}{D}}|h_{m,t}|\mathbf{g}_{m,t}\right.\nonumber \\&\left.-\epsilon_{t}\sum_{n=1}^N \sqrt{\frac{p_n^{\max}}{D(\epsilon_{t}^2+\bar{g}_{t}^2)}}|h_{n,t}|\mathbf{g}_{n,t} \right.\nonumber \\&\left.+\sum_{n=1}^N\sqrt{\frac{p_n^{\max}}{D}}|h_{n,t}|\bar{g}_{t}\mathbf{1} +\epsilon_{t}\mathbf{z}_{t}\right\|^2.
\end{align}

Rearranging this inequality and taking the expectation, we get
\begin{align}
&\mathbb{E}[F(\mathbf{w}_{t})-F(\mathbf{w}_{t-1})]
\leq -\alpha\mathbf{g}^T_t\left(\sum_{m=1}^M \sqrt{\frac{p_m^{\max}}{D}}\mathbb{E}[|h_{m,t}|\mathbf{g}_{m,t}]\right.\nonumber \\&-\epsilon_{t}\sum_{n=1}^N \sqrt{\frac{p_n^{\max}}{D(\epsilon_{t}^2 +\bar{g}_{t}^2)}}\mathbb{E}[|h_{n,t}|\mathbf{g}_{n,t}]\nonumber \\&\left.+\sum_{n=1}^N\sqrt{\frac{p_n^{\max}}{D}}\mathbb{E}[|h_{n,t}|\bar{g}_{t}\mathbf{1}] +\mathbb{E}[\epsilon_{t}\mathbf{z}_{t}]\right) \nonumber \\&+\frac{\alpha^2L}{2}\mathbb{E}\left[\left\|\sum_{m=1}^M \sqrt{\frac{p_m^{\max}}{D}}|h_{m,t}|\mathbf{g}_{m,t}+\epsilon_{t}\mathbf{z}_{t}\right.\right.\nonumber \\&\left.\left.-\epsilon_{t}\sum_{n=1}^N \sqrt{\frac{p_n^{\max}}{D(\epsilon_{t}^2+\bar{g}_{t}^2)}}|h_{n,t}|\mathbf{g}_{n,t} +\sum_{n=1}^N\sqrt{\frac{p_n^{\max}}{D}}|h_{n,t}|\bar{g}_{t}\mathbf{1} \right\|^2\right]
\nonumber \\&\leq-\alpha\left(\sum_{i=1}^M \sqrt{\frac{p_i^{\max}}{D}}\sigma_i\sqrt{\frac{\pi}{2}}\right.\nonumber \\&\left.-\epsilon_{t}\sum_{n=1}^N \sqrt{\frac{p_n^{\max}}{D(\epsilon_{t}^2+\bar{g}_{t}^2)}}\sigma_n \sqrt{\frac{\pi}{2}}\right)\|\mathbf{g}_{t}\|^2 \nonumber \\&+\frac{\alpha^2L}{2}\mathbb{E}\left[\left\|\sum_{m=1}^M \sqrt{\frac{p_m^{\max}}{D}}|h_{m,t}|\mathbf{g}_{m,t} +\sum_{n=1}^N\sqrt{\frac{p_n^{\max}}{D}}|h_{n,t}|\bar{g}_{t}\mathbf{1}\right.\right. \nonumber \\&\left.\left.-\epsilon_{t}\sum_{n=1}^N \sqrt{\epsilon_{t}^2\frac{p_n^{\max}}{D(\epsilon_{t}^2 +\bar{g}_{t}^2)}}|h_{n,t}|\mathbf{g}_{n,t} +\epsilon_{t}\mathbf{z}_{t}\right\|^2\right]
\nonumber \\&\leq-\alpha\sqrt{\frac{\pi}{2}}\left(\sum_{i=1}^M \sqrt{\frac{p_i^{\max}}{D}}\sigma_i-\sum_{n=1}^N \sqrt{\frac{p_n^{\max}}{D}}\sigma_n\right)\|\mathbf{g}_{t}\|^2 \nonumber \\&+\frac{\alpha^2L}{2}\mathbb{E}\left[\left\|\sum_{m=1}^M \sqrt{\frac{p_m^{\max}}{D}}|h_{m,t}|\mathbf{g}_{m,t} +\sum_{n=1}^N\sqrt{\frac{p_n^{\max}}{D}}|h_{n,t}|\bar{g}_{t}\mathbf{1}\right.\right.\nonumber \\&\left.\left.-\epsilon_{t}\sum_{n=1}^N \sqrt{\frac{p_n^{\max}}{D(\epsilon_{t}^2+\bar{g}_{t}^2)}}|h_{n,t}|\mathbf{g}_{n,t} +\epsilon_{t}\mathbf{z}_{t}\right\|^2\right].\label{TaylorexpectationBEV}
\end{align}

Using the triangle inequality of norms and Jensen’s inequality, we have
\begin{align}\label{JensenexpectationBEV1}
&\mathbb{E}\left[\left\|\sum_{m=1}^M \sqrt{\frac{p_m^{\max}}{D}}|h_{m,t}|\mathbf{g}_{m,t} +\sum_{n=1}^N\sqrt{\frac{p_n^{\max}}{D}}|h_{n,t}|\bar{g}_{t}\mathbf{1} \right.\right.\nonumber \\&\left.\left.-\epsilon_{t}\sum_{n=1}^N \sqrt{\frac{p_n^{\max}}{D(\epsilon_{t}^2+\bar{g}_{t}^2)}}|h_{n,t}|\mathbf{g}_{n,t} +\epsilon_{t}\mathbf{z}_{t}\right\|^2\right]
\nonumber \\
&=
\mathbb{E}\left[\left\|\sum_{m=1}^M \sqrt{\frac{p_m^{\max}}{D}}|h_{m,t}|\mathbf{g}_{m,t} +\sum_{n=1}^N\sqrt{\frac{p_n^{\max}}{D}}|h_{n,t}|\bar{g}_{t}\mathbf{1}\right.\right.\nonumber \\&\left.\left.-\epsilon_{t}\sum_{n=1}^N \sqrt{\frac{p_n^{\max}}{D(\epsilon_{t}^2+\bar{g}_{t}^2)}}|h_{n,t}|\mathbf{g}_{n,t}\right\|^2\right] +\mathbb{E}[\|\epsilon_{t}\mathbf{z}_{t}\|^2]
\nonumber \\
&\leq
\mathbb{E}\left[\left(\sum_{m=1}^M \sqrt{\frac{p_m^{\max}}{D}}|h_{m,t}|\|\mathbf{g}_{m,t}\| +\sum_{n=1}^N\sqrt{\frac{p_n^{\max}}{D}}|h_{n,t}|\|\bar{g}_{t}\mathbf{1}\| \right.\right. \nonumber \\&\left.\left.+\epsilon_{t}\sum_{n=1}^N \sqrt{\frac{p_n^{\max}}{D(\epsilon_{t}^2+\bar{g}_{t}^2)}}|h_{n,t}|\|\mathbf{g}_{n,t}\|\right)^2\right] +\epsilon^2z^2
\nonumber \\
&\leq \mathbb{E}\left[(U+N)\left(\sum_{m=1}^M\frac{p_m^{\max}}{D}|h_{m,t}|^2\|\mathbf{g}_{m,t}\|^2 \right.\right. \nonumber \\&\left.\left.+\sum_{n=1}^N \frac{\epsilon_{t}^2p_n^{\max}}{D(\epsilon_{t}^2 +\bar{g}_{t}^2)}|h_{n,t}|^2\|\mathbf{g}_{n,t}\|^2\right.\right. \nonumber \\&\left.\left.+\sum_{n=1}^N\frac{p_n^{\max}}{D}|h_{n,t}|^2\|\mathbf{g}_{t}\|^2\right)\right]+\epsilon^2z^2
\nonumber \\&\leq (U+N)\left(\sum_{i=1}^U\frac{p_i^{\max}}{D}2\sigma_i^2\|\mathbf{g}_{t}\|^2 +\sum_{m=1}^M\frac{p_m^{\max}}{D}2\sigma_m^2\delta^2\right. \nonumber \\&\left.+\sum_{n=1}^N \frac{\epsilon_{t}^2p_n^{\max}}{D(\epsilon_{t}^2 +\bar{g}_{t}^2)}2\sigma_n^2\delta^2\right)+\epsilon^2z^2
\nonumber \\
&\leq (U+N)\left(\sum_{i=1}^U\frac{p_i^{\max}}{D}2\sigma_i^2\|\mathbf{g}_{t}\|^2 +\sum_{i=1}^U\frac{p_i^{\max}}{D}2\sigma_i^2\delta^2\right)+\epsilon^2z^2.
\end{align}

Substituting \eqref{JensenexpectationBEV1} to \eqref{TaylorexpectationBEV}, we get
\begin{align}\label{TaylorexpectationBEV3}
&\mathbb{E}[F(\mathbf{w}_{t})-F(\mathbf{w}_{t-1})]\nonumber \\&
\leq -\alpha\sqrt{\frac{\pi}{2}}\left(\sum_{i=1}^M \sqrt{\frac{p_i^{\max}}{D}}\sigma_i-\sum_{n=1}^N \sqrt{\frac{p_n^{\max}}{D}}\sigma_n\right)\|\mathbf{g}_t\|^2 \nonumber \\&+\frac{\alpha^2L}{2}\left((U+N)\left(\sum_{i=1}^U\frac{p_i^{\max}}{D}2\sigma_i^2 \|\mathbf{g}_{t}\|^2 \right.\right. \nonumber \\&\left.\left.+\sum_{i=1}^U\frac{p_i^{\max}}{D}2\sigma_i^2\delta^2\right)+\epsilon^2z^2\right)
\nonumber \\&=\left(\frac{\alpha^2L}{2}\Omega_{BEV}-\alpha\omega_{BEV}\right)\|\mathbf{g}_t\|^2
+\frac{\alpha^2L}{2}(\Omega_{BEV}\delta^2+\epsilon^2z^2),
\end{align}
where
\begin{align}\label{TayloromegaBEVs}
\omega_{BEV}&=\sum_{i=1}^M \sqrt{\frac{p_i^{\max}\pi}{2D}}\sigma_i-\sum_{n=1}^N \sqrt{\frac{p_n^{\max}\pi}{2D}}\sigma_n,\\
\Omega_{BEV}&=(U+N)\sum_{i=1}^U\frac{2\sigma_i^2 p_i^{\max}}{D}.\label{TayloromegaBEVL}
\end{align}%and $\omega_{BEV}>0$ is the convergence condition.}

If $\mathbb{E}(F(\mathbf{w}_{t})-F(\mathbf{w}_{t-1}))\leq 0$, the objective decreases monotonically, then FL converges in mean. Thus, to ensure the convergence, we have the following convergence condition
\begin{align}\label{conditionbev1}
\frac{\alpha^2L}{2}\Omega_{BEV}-\alpha\omega_{BEV}<0.
%\sum_{i=1}^M \sqrt{\frac{p_i^{\max}}{D}}\sigma_i-\sum_{n=1}^N \sqrt{\frac{p_n^{\max}}{D}}\sigma_n>0.
\end{align}

Now extend the expectation over randomness in the trajectory, and perform a telescoping sum over the $T$ iterations:
\begin{align}\label{sumexpectationBEV}
F(\mathbf{w}_{0})-F(\mathbf{w}^*)&\geq F(\mathbf{w}_{0})-\mathbb{E}[F(\mathbf{w}_T)]
\nonumber \\
&
=\mathbb{E}\left[\sum_{t=1}^{T}(F(\mathbf{w}_{t-1})-F(\mathbf{w}_{t}))\right]
\nonumber \\
&
\geq\mathbb{E}\left[ \sum_{t=1}^{T}\left(\left(\alpha\omega_{BEV}-\frac{\alpha^2L}{2}\Omega_{BEV}\right)\|\mathbf{g}_t\|^2 \right.\right. \nonumber \\&\left.\left.-\frac{\alpha^2L}{2}(\Omega_{BEV}\delta^2+\epsilon^2z^2)\right)\right].%
%\mathbb{E}(F(\mathbf{w}_{t})-F(\mathbf{w}_{t-1}))
%\leq& -\alpha\mathbf{g}^T_t(\sum_{i=1}^U \frac{\mathbb{E}(\mathbf{g}_{i,t})}{U}+\mathbb{E}(\mathbf{\tilde{z}}))+\frac{\alpha^2L}{2}\mathbb{E}(\|\sum_{i=1}^U \frac{\mathbf{g}_{i,t}}{U}+\mathbf{\tilde{z}}\|^2)
%\nonumber \\
%&=\alpha\|\mathbf{g}_t\|^2-\frac{\alpha^2L}{2}(\delta^2+\|\mathbf{g}_t\|^2+\frac{\sigma^2}{U^2b^2_t}).
\end{align}
We can rearrange this inequality to yield the rate:
\begin{align}\label{sumexpectationBEV1}
\mathbb{E}&\left[\sum_{t=1}^{T}\left(\alpha\omega_{BEV}-\frac{\alpha^2L}{2}\Omega_{BEV}\right)\|\mathbf{g}_t\|^2\right] \nonumber \\&\leq F(\mathbf{w}_{0})-F(\mathbf{w}^*)+\frac{\alpha^2L}{2}\sum_{t=1}^{T}(\Omega_{BEV}\delta^2+\epsilon^2z^2).
\end{align}

If FL converges, $\alpha\omega_{BEV}-\frac{\alpha^2L}{2}\Omega_{BEV}> 0$, and then we get
\begin{align}\label{sumexpectationBEV2}
&\mathbb{E}\left[\sum_{t=1}^{T}\frac{1}{T}\|\mathbf{g}_t\|^2\right]\nonumber\\
&\leq  \frac{F(\mathbf{w}_{0})-F(\mathbf{w}^*) +\frac{\alpha^2L}{2}\sum_{t=1}^{T}(\Omega_{BEV}\delta^2+\epsilon^2z^2)}{T(\alpha\omega_{BEV}-\frac{\alpha^2L}{2}\Omega_{BEV})}.
\end{align}

Let $\alpha=\frac{\omega_{BEV}}{L\Omega_{BEV}\sqrt{T}}\bar{\alpha}$, where $\bar{\alpha}< 2\sqrt{T}$ is a positive constant, and then we have
\begin{align}
&\mathbb{E}\left[\sum_{t=1}^{T}\frac{1}{T}\|\mathbf{g}_t\|^2\right]\nonumber\\
&\leq  \frac{F(\mathbf{w}_{0})-F(\mathbf{w}^*) +\frac{\bar{\alpha}^2\omega_{BEV}^2}{2L\Omega_{BEV}}\left(\delta^2+\frac{1}{\Omega_{BEV}}\epsilon^2 z^2\right)}{T\left(\bar{\alpha}\frac{\omega_{BEV}^2}{L\Omega_{BEV}\sqrt{T}} -\frac{\bar{\alpha}^2\omega_{BEV}^2}{2LT\Omega_{BEV}}\right)}\nonumber\\
&=
\frac{\frac{L\Omega_{BEV}}{\omega_{BEV}^2}(F(\mathbf{w}_{0})-F(\mathbf{w}^*)) +\frac{\bar{\alpha}^2}{2}\left(\delta^2+\frac{1}{\Omega_{BEV}}\epsilon^2 z^2\right)}{T(\frac{\bar{\alpha}}{\sqrt{T}}-\frac{\bar{\alpha}^2}{2T})}
\nonumber\\
&\leq\frac{\frac{L\Omega_{BEV}}{\omega_{BEV}^2}(F(\mathbf{w}_{0})-F(\mathbf{w}^*)) +\frac{\bar{\alpha}^2}{2}\left(\delta^2+\frac{1}{\Omega_{BEV}}\epsilon^2 z^2\right)}{T\frac{\bar{\alpha}}{2\sqrt{T}}}
\nonumber\\
&=\frac{\frac{2L\Omega_{BEV}}{\bar{\alpha}\omega_{BEV}^2}(F(\mathbf{w}_{0})-F(\mathbf{w}^*)) +\bar{\alpha}\left(\delta^2+\frac{1}{\Omega_{BEV}}\epsilon^2 z^2\right)}{\sqrt{T}}.\label{sumexpectationBEV3}
\end{align}
\end{appendices}

\bibliographystyle{IEEEtran}
\bibliography{ref}
\end{document}